
\documentclass{article}

\usepackage{microtype}
\usepackage{graphicx}
\usepackage{subfig}
\usepackage{booktabs} 

\usepackage{hyperref}


\usepackage{amsmath}
\usepackage{amsthm}
\usepackage{amsfonts}
\usepackage{amssymb}
\usepackage{soul}
\usepackage{natbib}
\usepackage{bibunits}
\defaultbibliography{main}
\defaultbibliographystyle{icml2019}
\newcommand{\R}{\mathbb{R}}
\newcommand{\N}{\mathbb{N}}
\newcommand{\prob}{\mathbb{P}}
\newcommand{\domain}{\mathcal{X}}
\newcommand{\expectation}{\mathbb{E}}

\newcommand{\cov}{\textnormal{Cov}}
\newcommand{\ei}{\textrm{EI-CF}}
\newcommand{\picf}{\textrm{PI-CF}}
\newcommand{\eis}{\textrm{EI}}
\newtheorem{theorem_paper}{Theorem}
\newtheorem{proposition_paper}{Proposition}
\newtheorem{theorem}{Theorem}[section]
\newtheorem{lemma}[theorem]{Lemma}
\newtheorem{proposition}[theorem]{Proposition}
\newtheorem{corollary}[theorem]{Corollary}
\newtheorem{definition}[theorem]{Definition}
\newcommand{\pfcomment}[1]{}
\newcommand{\pfsuggestion}[1]{}
\newcommand{\pfoptional}[1]{} %
\newcommand{\racomment}[1]{}
\newcommand{\rasuggestion}[1]{}
\newcommand{\pfedit}[1]{}
\newcommand{\raedit}[1]{}

\usepackage[accepted]{icml2019}

\icmltitlerunning{Bayesian Optimization of Composite Functions}

\begin{document}

\twocolumn[
\icmltitle{Bayesian Optimization of Composite Functions}



\icmlsetsymbol{equal}{*}

\begin{icmlauthorlist}
\icmlauthor{Raul Astudillo}{cornell}
\icmlauthor{Peter I. Frazier}{cornell,uber}

\end{icmlauthorlist}
\icmlaffiliation{cornell}{School of Operations Research and Information Engineering, Cornell University, Ithaca, NY, USA}
\icmlaffiliation{uber}{Uber, San Francisco, CA, USA}

\icmlcorrespondingauthor{Raul Astudillo}{ra598@cornell.edu}
\icmlcorrespondingauthor{Peter I. Frazier}{pf98@cornell.edu}

\icmlkeywords{Machine Learning, ICML}

\vskip 0.3in
]



\printAffiliationsAndNotice{}  
\begin{abstract}
We consider optimization of {\it composite} objective functions, i.e., of the form $f(x)=g(h(x))$, where $h$ is a black-box derivative-free expensive-to-evaluate function with vector-valued outputs, and $g$ is a cheap-to-evaluate real-valued function. While these problems can be solved with standard Bayesian optimization, we propose a novel approach that exploits the composite structure of the objective function to substantially improve sampling efficiency. Our approach models $h$ using a multi-output Gaussian process and chooses where to sample using the expected improvement evaluated on the implied non-Gaussian posterior on $f$, which we call expected improvement for composite functions (\ei).  Although \ei\ cannot be computed in closed form, we provide a novel stochastic gradient estimator that allows its efficient maximization.  We also show that our approach is asymptotically consistent, i.e., that it recovers a globally optimal solution as sampling effort grows to infinity, generalizing previous convergence results for classical expected improvement.
Numerical experiments show that our approach dramatically outperforms standard Bayesian optimization benchmarks, reducing simple regret by several orders of magnitude.
\end{abstract}

\begin{bibunit}
\section{Introduction}
We consider optimization of \textit{composite} objective functions, i.e., of the form $f(x) =g(h(x))$, where $h$ is a black-box  expensive-to-evaluate vector-valued function, and $g$ is a real-valued function that can be cheaply evaluated.  We assume evaluations are noise-free. These problems arise, for example, in calibration of simulators to real-world data \cite{vrugt2001calibration, cullick2006improved, schultz2018bayesian};
in materials and drug design \cite{kapetanovic2008computer, frazier2016bayesian} when seeking to design a compound with a particular set of physical or chemical properties;
when finding maximum {\it a posteriori} estimators with expensive-to-evaluate likelihoods \cite{bliznyuk2008bayesian}; and in constrained optimization \cite{gardner14, hernandez_constrained} when seeking to maximize one expensive-to-evaluate quantity subject to constraints on others (See Section \ref{sec:related_work} for a more detailed description of these problems.).

One may ignore the composite structure of the objective and solve such problems using Bayesian optimization (BO) \citep{brochu2010tutorial}, which has been shown to perform well compared with other general-purpose optimization methods for black-box derivative-free expensive-to-evaluate objectives \citep{snoek2012practical}. 
In the standard BO approach, one would build a Gaussian process (GP) prior over $f$ based on past observations of $f(x)$, and then choose points at which to evaluate $f$ by maximizing an acquisition function computed from the posterior.  This approach would not use observations of $h(x)$ or knowledge of $g$.

In this paper, we describe a novel BO approach that leverages the structure of composite objectives to optimize them more efficiently. This approach builds a multi-output GP on $h$, and uses the expected improvement \cite{jones1998efficient} under the implied statistical model on $f$ as its acquisition function.  This implied statistical model is typically non-Gaussian when $g$ is non-linear.
We refer to the resulting acquisition function as expected improvement for composite functions ($\ei$) to distinguish it from the classical expected improvement (EI) acquisition function evaluated on a GP posterior on $f$.

Intuitively, the above approach can substantially outperform standard BO when 
observations of $h(x)$ provide information relevant to optimization that is not available from observations of $f(x)$ alone.
As one example, suppose $x$ and $h(x)$ are both one-dimensional and $g(y)=y^2$.  If $h$ is continuous, $h(0)<0$, and $h(1)>0$, 
 then our approach knows that there  is a global minimum in the interval $(0,1)$, while the standard approach does not.
This informational benefit is compounded further when $h$ is vector-valued.

While $\ei$ is simply the expected improvement under a different statistical model, unlike the classical EI acquisition function, it lacks a closed-form analytic expression and must be evaluated through simulation. We provide a simulation-based method for computing unbiased estimators of the gradient of the $\ei$ acquisition function, which we use within multi-start stochastic gradient ascent to allow efficient maximization. 
We also show that optimizing using $\ei$ is asymptotically consistent under suitable regularity conditions, in the sense that the best point found converges to the global maximum of $f$ as the number of samples grows to infinity.

In numerical experiments comparing with standard BO benchmarks, \ei\ provides immediate regret that is several orders of magnitude smaller, and reaches their final solution quality using less than 1/4 the function evaluations.

\section{Related Work}
\label{sec:related_work}
\subsection{Related Methodological Literature}
We work within the Bayesian optimization framework, whose origins date back to the seminal work of \citet{movckus1975bayesian}, and which has recently become popular due to its success in hyperparameter optimization of machine learning algorithms \citep{snoek2012practical, swersky2013multi}.

Optimizing composite functions has been studied in first- and second-order optimization \citep{shapiro2003class, drusvyatskiy2016efficiency}.  This literature differs from our paper in that it assumes derivatives are available, and also often assumes convexity and that evaluations are inexpensive. In this setting, leveraging the structure of the objective has been found to improve performance, just as we find here in the setting of derivative-free optimization.
However, to the best of our knowledge, ours is the first paper to study composite objective functions within the BO framework and also the first within the more general context of optimization of black-box derivative-free expensive-to-evaluate functions.

Our work is related to  \citet{marque2017efficient}, which proposes a framework for efficient sequential experimental design for GP-based
modeling of nested computer codes. In contrast with our work, that work's goal is not to optimize a composite function, but instead to learn it as accurately as possible within a limited evaluation budget.  A predictive variance minimization sampling policy is proposed and methods for efficient computation are provided. Moreover, it is assumed that both the inner ($h$) and outer ($g$) functions are real-valued and  expensive-to-evaluate black-box functions, while our method uses the ease-of-evaluation of the outer function for substantial benefit.

Our work is also similar in spirit to \citet{overstall2013strategy}, which proposes to model an expensive-to-evaluate vector of parameters of a posterior probability density function using a multi-output GP instead of the function directly using a single-output GP. The surrogate model is then used to perform Bayesian inference.

Constrained optimization is a special case of optimization of a composite objective.  To see this, take $h_1$ to be the objective to be maximized and take $h_i$, for $i>1$, to be the constraints, all of which are constrained to be non-negative without loss of generality. Then, we recover the original constrained optimization problem by setting 
\begin{equation*}
    g(y) = \begin{cases}
y_1,\textnormal{ if } y_i\ge 0 \textnormal{ for all } i>1,\\
-\infty, \textnormal{ otherwise.}
\end{cases}
\end{equation*}
Moreover, when specialized to this particular setting, our $\ei$ acquisition function is equivalent to the expected improvement for constrained optimization as proposed by \citet{schonlau1998global} and \citet{gardner14}. 

Within the constrained BO literature, our work also shares several methodological similarities with \citet{picheny2016bayesian}, which considers an augmented Lagrangian and models its components as GPs instead of it directly as a GP. As in our work, the expected improvement under this statistical model is used as acquisition function. Moreover, it is shown that this approach outperforms the standard BO approach.

Our method for optimizing the $\ei$ acquisition function uses an unbiased estimator of the gradient of $\ei$ within a multistart stochastic gradient ascent framework.  This technique is structurally similar to methods developed for optimizing acquisition functions in other BO settings without composite objectives, including the parallel expected improvement 
\cite{wang2016parallel}
and the parallel knowledge-gradient \cite{wu2016parallel}.

\subsection{Related Application Literature}
Optimization of composite black-box derivative-free expensive-to-evaluate functions arises in a number of application settings in the literature, though this literature does not leverage the composite structure of the objective to optimize it more efficiently.

In materials design, it arises when the objective is the combination of multiple material properties via a {\it performance index} \cite{ashby1993materials}, 
e.g., the specific stiffness, which is the ratio of Young's modulus and the density, or {\it normalization} \cite{jahan2015state}. 
Here, $h(x)$ is the set of material properties that results from a particular chemical composition or set of processing conditions, $x$, and $g$ is given by the performance index or normalization method used.  Evaluating the material properties, $h(x)$, that result from a materials design typically requires running expensive physical or computational experiments that do not provide derivative information, for which BO is appropriate \citep{kapetanovic2008computer,ueno2016combo,ju2017designing,griffiths2017constrained}.

Optimization of composite functions also arises
in calibration of expensive black-box simulators  \citep{vrugt2001calibration, cullick2006improved, schultz2018bayesian},
where the goal is to find input parameters, $x$, to the simulator such that its vector-valued output, $h(x)$, most closely matches a vector data observed in the real world, $y_{\textnormal{obs}}$. Here, the objective to be minimized is $g(h(x)) = || h(x) - y_{\textnormal{obs}} ||$, where $||\cdot||$ is often the $L_1$ norm, $L_2$  norm, or some monotonic transformation of the likelihood of observation errors.


If one has a prior probability density $p$ on $x$, and the log-likelihood of some real-world observation error, $\epsilon$, is proportional to $|| \epsilon ||$ (as it would be, for example, with independent normally distributed errors taking $||\cdot||$ to be the $L_2$ norm),
then, finding the maximum {\it a posteriori} estimator of $x$ 
\citep{bliznyuk2008bayesian}
is an optimization problem with a composite objective: the log-posterior is equal to the sum of a constant and $g(h(x)) = -\beta|| h(x) - y_{\textnormal{obs}} ||^2 + \log(p(x))$ (In this example, $g$ is actually a function of both $h(x)$ and $x$.  Our framework extends easily to this setting as long as $g$ remains a cheap-to-evaluate function.).



\section{Problem Description and Standard Approach}
As described above, we consider optimization of objectives of the form $f(x) = g(h(x))$, where 
$h:\domain\rightarrow\mathbb{R}^{m}$ is a black-box expensive-to-evaluate continuous function whose evaluations do not provide derivatives, $g:\mathbb{R}^{m}\rightarrow\mathbb{R}$ is a function that can be cheaply evaluated, and $\domain\subset\R^d$.   
As is common in BO, we assume that $d$ is not too large ($<20$) and that  projections onto $\domain$ can be efficiently computed.
We also place the technical condition that $\expectation \left[|g(Z)|\right] < \infty$, where $Z$ is an $m$-variate standard normal random vector. The problem to be solved is
\begin{align}
\label{main}
    \max_{x\in\domain}g(h(x)).
\end{align}
As discussed before, one can solve problem \eqref{main} by applying standard BO to the objective function, $f := g\circ h$. This approach models $f$ as drawn from a GP prior probability distribution.  Then, iteratively, indexed by $n$, this approach would choose the point $x_n \in \domain$ at which to evaluate $f$ next by optimizing an acquisition function, such as the EI acquisition function \citep{movckus1975bayesian,jones1998efficient}.  This acquisition function would be calculated from the posterior distribution given $\{\left(x_i, f(x_i)\right)\}_{i=1}^n$, which is itself a GP, and would quantify the value of an evaluation at a particular point.  
Although $h(x)$ would be observed as part of this standard approach, these evaluations would be ignored when calculating the posterior distribution and acquisition function.

\section{Our Approach}

We now describe our approach, which like the standard BO approach is comprised of a statistical model and an acquisition function. Unlike standard BO, however, our approach leverages the additional information in evaluations of $h$, along with knowledge of $g$.  We argue below and demonstrate in our numerical experiments that this additional information can substantially reduce the number of evaluations required to find good approximate global optima.

Briefly, our statistical model is a multi-output Gaussian process on $h$ \citep{alvarez2012kernels} (Section~\ref{sec:GP}), and our acquisition function, $\ei$, is the expected improvement under this statistical model (Section~\ref{sec:EI}).
This acquisition function, unfortunately, cannot be computed in closed form for most functions $g$.  
In Section~\ref{sec:computation}, under mild regularity conditions, we provide a technique for efficiently maximizing $\ei$.  
We also provide a theoretical analysis showing that $\ei$ is asymptotically consistent (Section~\ref{sec:theory}).
Finally, we conclude this section by discussing the computational complexity of our approach (Section~\ref{sec:complexity}).

\subsection{Statistical Model}
\label{sec:GP}
We model $h$ as drawn from a multi-output GP distribution \citep{alvarez2012kernels}, $\mathcal{GP}(\mu,K)$, where $\mu:\domain\rightarrow\R^{m}$ is the mean function, $K:\domain\times\domain\rightarrow\mathcal{S}_{++}^{m}$ is the covariance function, and $\mathcal{S}_{++}^{m}$ is the cone of positive definite matrices. Analogously to the single-output case, after observing $n$ evaluations of $h$, $h(x_1),\ldots,h(x_n)$, the posterior distribution on $h$ is again a multi-output GP, $\mathcal{GP}(\mu_n,K_n)$, where $\mu_n$ and $K_n$ can be computed in closed form in terms of $\mu$ and $K$ \citep{liu2018remarks}.

\subsection{Expected Improvement for Composite Functions}
\label{sec:EI}

We define the expected improvement for composite functions analogously to the classical expected improvement, but where our posterior on $f(x)$ is given by the composition of $g$ and the normally distributed posterior distribution on $h(x)$:
\begin{align}
\ei_n(x) = \expectation_{n}\left[\left\{g(h(x))-f_n^{*}\right\}^{+}\right],
\end{align}
where $f_n^* = \max_{i=1,
\ldots,n}f(x_i)$ is the maximum value across the points that have been evaluated so far, $x_1,\ldots, x_n$,  $\expectation_n$ indicates the conditional expectation given the available observations at time $n$,  $\{\left(x_i, h(x_i)\right)\}_{i=1}^n$, and $a^+ = \max(0,a)$ is the positive part function.

When $h$ is scalar-valued and $g$ is the identity function, $\ei_n$ is equivalent to the classical expected improvement computed directly from a GP prior on $f$, and has an analytic expression that makes it easy to compute and optimize.   For general nonlinear functions $g$, however, $\ei_n$ cannot be computed in closed form. Despite this, as we shall see next, under mild regularity conditions, $\ei_n$ can be efficiently maximized.

\begin{figure*}
\subfloat[Posterior on $h$ used by our $\ei$ acquisition function]{%
\includegraphics[width=0.49\linewidth]{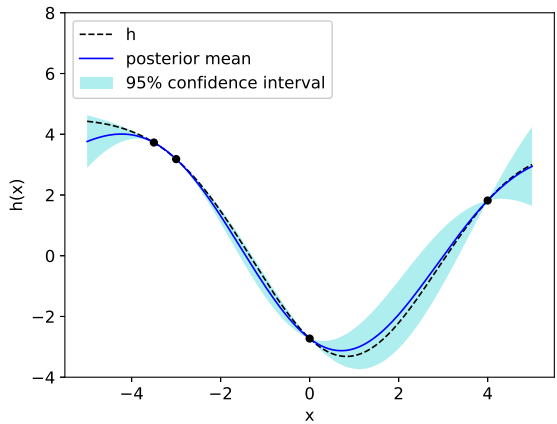}
}\hfill\\
\subfloat[Implied posterior on $f$ used by our $\ei$ acquisition function]{%
\includegraphics[width=0.49\linewidth]{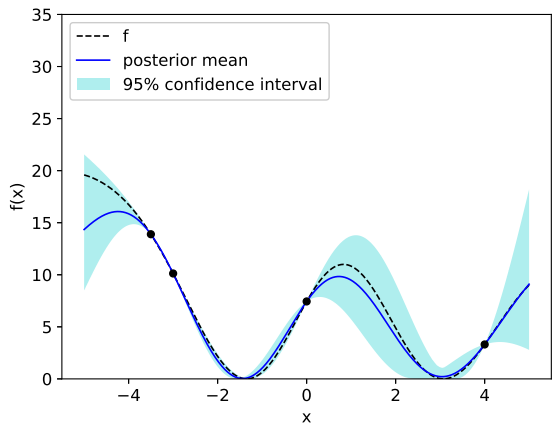}
}
\subfloat[Posterior on $f$ used by the classical EI acquisition function]{%
\includegraphics[width=0.49\linewidth]{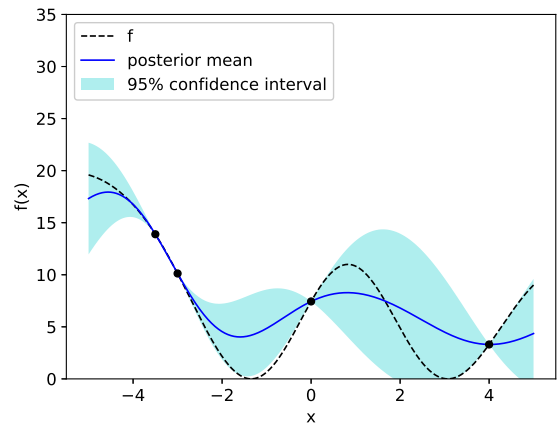}
}\\
\subfloat[$\ei$ acquisition function]{%
\includegraphics[width=0.49\linewidth]{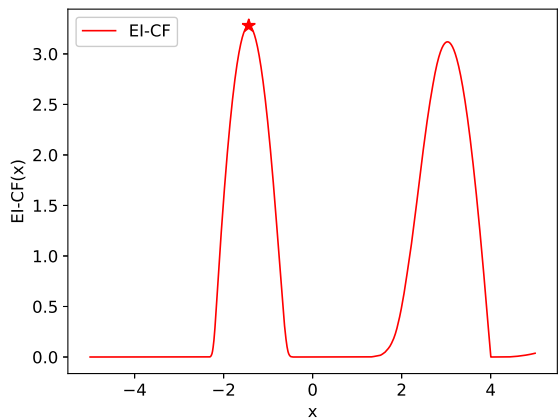}
}
\subfloat[Classical EI acquisition function]{%
\includegraphics[width=0.49\linewidth]{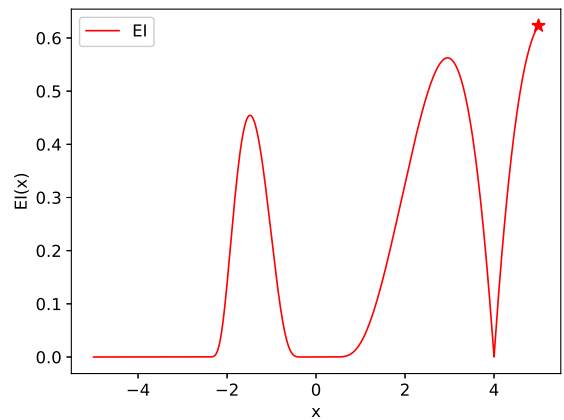}
}
\caption{Illustrative example of the $\ei$ and classical EI acquisition functions, in a problem where $h$ is scalar-valued and $g(h(x))=h(x)^2$.
Observations of $h(x)$ provide a substantially more accurate view of where global optima of $f$ reside as compared with observations of $f(x)$ alone, and cause $\ei$ to evaluate at points much closer to these global optima.
\label{fig:example}}
\end{figure*}

Figure~\ref{fig:example} illustrates the $\ei$ and classical EI acquisition functions in a setting where $h$ is scalar-valued, $f(x) = g(h(x)) = h(x)^2$, we have evaluated $h$ and $f$ at four points, and we wish to minimize $f$.  The right-hand column shows the posterior distribution on $f$ and EI acquisition function using the standard approach: posterior credible intervals have 0 width at points where we have evaluated (since evaluations are free from noise), and become wider as we move away from them.  The classical expected improvement is largest near the right limit of the domain, where the posterior mean computed using observations of $f(x)$ alone is relatively small and has large variance.

The left-hand column shows the posterior distribution on $h$, computed using a GP (single-output in this case, since $h$ is scalar-valued), the resulting posterior distribution on $f$, and the resulting $\ei$ acquisition function.  The posterior distribution on $f(x)$ (which is not normally distributed) has support only on non-negative values, and places higher probability on small values of $f(x)$ in the regions $x\in [-2,-1] \cup [2.5,3.5]$, which creates a larger value for $\ei$ in these regions.  

Examining past observations of $h(x)$, the points with high $\ei$ ($x \in [-2,-1] \cup [2.5,3.5]$) seem substantially more valuable to evaluate than the point with largest EI ($x=5$).  Indeed, for the region $[-2,-1]$, we know that $h(x)$ is below $0$ near the left limit, and is above $0$ near the right limit.  The continuity of $h$ implies that $h(x)$ is 0 at some point in this region, which in turn implies that $f$ has a global optimum in this region.  Similarly, $f$ is also quite likely to have a global optimum in $[2.5,3.5]$.  $\ei$ takes advantage of this knowledge in its sampling decisions while classical EI does not.

\subsection{Computation and Maximization of $\ei$}
\label{sec:computation}
We now describe computation and efficient maximization of $\ei$.
For any fixed $x\in\domain$, the time-$n$ posterior distribution on $h(x)$ is multivariate normal.
(By the ``time-$n$ posterior distribution'', we mean the conditional distribution given $\{\left(x_i,h(x_i)\right)\}_{i=1}^n$.)
We let $\mu_{n}(x)$ denote its mean vector and $K_n(x)$ denote its covariance matrix. We also let $C_{n}(x)$ denote the lower Cholesky factor of $K_n(x)$. Then, we can express $h(x) = \mu_{n}(x) + C_{n}(x)Z$, where $Z$ is a $m$-variate standard normal random vector under the time-$n$ posterior distribution, and thus
\begin{equation}
\label{ei_mc}
    \ei_n(x) = \expectation_{n}\left[\left\{ g(\mu_{n}(x) + C_{n}(x)Z) -  f_{n}^{*}\right\}^+ \right].
\end{equation} 
Thus, we can compute $\ei_n(x)$ via Monte Carlo, as summarized in Algorithm \ref{alg:ei}. We note that \eqref{ei_mc} and the condition $\expectation [|g(Z)|] < \infty$ imply that $\ei_n(x)$ is finite  for all $x\in\domain$.
\begin{algorithm}[h]
\begin{algorithmic}[1]
\caption{Computation of $\ei$}
\label{alg:ei}
\REQUIRE{point to be evaluated, $x$; number of Monte Carlo samples, $L$}
\FOR{$\ell = 1,\ldots,L$}
\STATE{Draw sample $Z^{(\ell)}\sim\mathcal{N}_{m}(0_m,I_m)$ and compute $\alpha^{(\ell)} := \left\{ g\left(\mu_{n}(x) + C_{n}(x)Z^{(\ell)}\right) -  f_{n}^{*}\right\}^+$}
\ENDFOR
\STATE{Estimate $\ei_n(x)$ by $\frac{1}{L}\sum_{\ell=1}^{L}\alpha^{(\ell)}$}
\end{algorithmic}
\end{algorithm}

In principle, the above is enough to maximize $\ei_n$ using a derivative-free global optimization algorithm (for non-expensive noisy functions). However, such methods typically require a large number of samples, and  optimization can be typically performed with much greater efficiency if derivative information is available \cite{jamieson2012query,swisher2000survey}. The following proposition describes a simulation-based procedure for generating such derivative information.
A formal statement and proof can be found in Appendix \ref{app:gradient_eicf}.

\begin{proposition_paper}
\label{thm:ei_gradient_main}
Under mild regularity conditions, 
$\ei_n$ is differentiable almost everywhere, and its gradient, when it exists, is given by
\begin{equation}\label{eqn:ei_gradient}
    \nabla\ei_n(x) = \expectation_{n}\left[\gamma_n(x,Z)\right],
\end{equation}
where
\begin{equation}
\label{eq:gamma}
    \gamma_n(x,Z) = \begin{cases}
0, \textnormal{ if } g(\mu_{n}(x) + C_{n}(x)Z) \leq  f_n^*,\\
\nabla g(\mu_{n}(x) + C_{n}(x)Z), \textnormal{ otherwise.}
\end{cases}
\end{equation}
\end{proposition_paper}

Thus, $\gamma_n$ provides an unbiased estimator of $\nabla \ei_n$.
To construct such an estimator, we would draw an independent standard normal random vector $Z$ and then compute $\gamma_n(x,Z)$ using \eqref{eq:gamma}, optionally averaging across multiple samples as in Algorithm~\ref{alg:ei}.
To optimize $\ei_n$, we then use this gradient estimation procedure within stochastic gradient ascent, using multiple restarts.  The final iterate from each restart is an approximate stationary point of the $\ei_n$.  We then use Algorithm~\ref{alg:ei} to select the stationary point with the best value of $\ei_n$.


\subsection{Theoretical Analysis}
\label{sec:theory}
Here we present two results giving insight into the properties of the expected improvement for composite functions. The first result simply states that, when $g$ is linear, $\ei$ has a closed form analogous to the one of the classical $\eis$. 
\begin{proposition_paper}
\label{prop:eicf_linear_main}
Suppose that $g$ is given by $g(y) = w^\top y$ for some fixed $w\in\R^m$. Then,
\begin{equation*}
\ei_n(x) =  \Delta_n(x)\Phi\left(\frac{\Delta_n(x)}{\sigma_n(x)}\right) + \sigma_n(x)\varphi\left(\frac{\Delta_n(x)}{\sigma_n(x)}\right),
\end{equation*}
where $\Delta_n(x) = w^\top \mu_n(x) - f_n^*$, $\sigma_n(x) = \sqrt{w^\top K_{n}(x)w}$, and $\varphi$ and $\Phi$ are the standard normal probability density function and cumulative distribution function, respectively. 
\end{proposition_paper}

 This result can be easily verified by noting that, since the time-$n$ posterior distribution of $h(x)$ is $m$-variate normal with mean vector  $\mu_n(x)$ and covariance matrix $K_n(x)$, the time-$n$ posterior distribution of $w^\top h(x)$ is normal with mean $w^\top\mu_n(x)$ and variance $w^\top K_n(x)w$. Proposition \ref{prop:eicf_linear_main} does not, however, mean that our approach is equivalent to the classical one when g is linear. This is because, in general, the posterior distribution given observations of $h(x)$ is different from the one given observations of $w^\top h(x)$ .  We refer the reader to Appendix \ref{app:eicf_linear} for a discussion.
  
Our second result states that, under suitable conditions, our acquisition function is asymptotically consistent, i.e., the solution found by our method converges to the global optimum when the number of evaluations goes to infinity. An analogous result for the classical expected improvement was proved by  \citet{vazquez2010convergence}.
\begin{theorem_paper}
\label{main_theorem}
Let $\{x_n\}_{n\in\N}$ be the sequence of evaluated points and suppose there exists $n_0\in\N$ such that for all $n\geq n_0$,
\begin{equation*}
    x_{n+1} \in\arg\max_{x\in\domain}\ei_{n}(x).
\end{equation*}
Then, under suitable regularity conditions and as $n\rightarrow \infty$,
\begin{equation*}
    f_{n}^* \rightarrow \max_{x\in\domain}f(x).
\end{equation*}
\end{theorem_paper}
A formal statement and proof of Theorem \ref{main_theorem} can be found in Appendix \ref{app:eicf_convergence}.
\subsection{Computational Complexity of Posterior Inference}
\label{sec:complexity}

The computation required to maximize the classical $\eis$ acquisition function is dominated by the computation of the posterior mean and variance and thus in principle scales as $\mathcal{O}(n^2)$ (with a pre-computation of complexity $\mathcal{O}(n^3)$) with respect to the number of evaluations \citep{shahriari2016taking}. However, recent advances on approximate fast GP training and prediction may considerably reduce the computational burden \citep{pleiss2018constant}.

In our approach, the computational cost is again dominated by the computation of the posterior mean and covariance matrix, $\mu_n(x)$ and $K_n(x)$, respectively. When the outputs of $h$ are modeled independently, the components of $\mu_n(x)$ and $K_n(x)$ can be computed separately ($K_n(x)$ is diagonal in this case) and thus computation of the posterior mean and covariance
scales as $\mathcal{O}(mn^2)$. This allows our approach to be used even if $h$ has a relatively large number of outputs. However, in general, if correlation between components of $h$ is modeled, these computations scale as $\mathcal{O}(m^2n^2)$. Therefore, in principle there is a trade-off between modeling correlation between components of $h$, which presumably allows for a faster learning of $h$, and retaining tractability in the computation of the acquisition function. 

\section{Numerical Experiments}
We compare the performance of three acquisition functions: expected improvement (EI), probability of improvement (PI) \cite{kushner1964new}, and the acquisition function that chooses points uniformly at random (Random), both under our proposed statistical model and the standard one, i.e., modeling $h$ using a multi-output GP and modeling $f$ directly using a single-output GP, respectively.  We refer the reader to Appendix \ref{app:picf} for a formal definition of the probability of improvement under our statistical model, and a discussion of how we maximize this acquisition function in our numerical experiments. To distinguish each acquisition function under our proposed statistical model from its standard version, we append "-CF" to its abbreviation; so if the classical expected improvement acquisition function is denoted EI, then the expected improvement under our statistical model is denoted EI-CF, as previously defined. We also include as a benchmark the predictive entropy search (PES) acquisition function \citep{hernandez2014predictive} under the standard statistical model, i.e., modeling $f$ directly using a single-output GP. For all problems and methods, an initial stage of evaluations is performed using $2(d+1)$ points chosen uniformly at random over $\domain$.

 For EI-CF, PI-CF, and Random-CF, the outputs of $h$ are
modeled using independent GP prior distributions. All GP
distributions involved, including those used by the standard
BO methods (EI, PI, Random, and PES), have a constant
mean function and ARD squared exponential covariance
function; the associated hyperparameters are estimated
under a Bayesian approach. As proposed in \citet{snoek2012practical}, for all methods we use an averaged version of the acquisition function, obtained by first drawing 10 samples of the GP hyperparameters, computing the acquisition function conditioned on each of these hyperparameters, and then averaging the results.

We calculate each method's simple regret at the point it believes to be the best based on evaluations observed thus far.  We take this point to be the point with the largest (or smallest, if minimizing) posterior mean.  For EI-CF, PI-CF, and Random-CF, we use the posterior mean on $f$ implied by the GP posterior on $h$, and for EI, PI, Random, and PES we use the GP posterior on $f$. Error bars in the plots below show the mean of the base-10 logarithm of the simple regret plus and minus 1.96 times the standard deviation divided by the square root of the number of replications. Each experiment was replicated 100 times.

Our code is available at \citet{bocf}.

\subsection{GP-Generated Test Problems}
The first two problems used functions $h$ generated at random from GPs.  Each component of $h$ was generated by sampling on a uniform grid from independent GP distributions with different fixed hyperparameters and then taking the resulting posterior mean as a proxy; the hyperparameters were not known to any of the algorithms. The details of each problem, including the function $g$ used, are summarized in Table \ref{tab:test}.

Results are shown on a logarithmic scale in Figures~\ref{fig:test1} and \ref{fig:test2}, where the horizontal axis indicates the number of samples following the initial stage.  $\ei$ outperforms the other methods significantly. Regret is smaller than the best of the standard BO benchmarks throughout and by several orders of magnitude after 50 evaluations (5 orders of magnitude smaller in test 1, and 2 in test 2). It also requires substantially fewer evaluations beyond the first stage to reach the regret achieved by the best of the standard BO benchmarks in 100 evaluations: approximately 30 in test 1, and 10 in test 2.  Random-CF performs surprisingly well in type-2 GP-generated problems, suggesting that a substantial part of the benefit provided by our approach is the value of the additional information available from observing $h(x)$.  In type-1 problems it does not perform as well, suggesting that high-quality decisions about where to sample are also important.

\newcommand{\zap}[1]{}

\begin{table}
\begin{center}
 \begin{tabular}{||c c c c||}
 \hline
 Problem & $\domain$ & $g$ & $m$ \\ [0.5ex] 
 \hline\hline
 1 & $[0,1]^4$  & $g(y) = -\|y - y_{\textnormal{obs}}\|_{2}^{2}$ & 5 \\ 
 \hline
 2 & $[0,1]^3$  & $g(y) = -\sum_{j}\exp(y_j)$ & 4 \\
 \hline
\end{tabular}
\caption{Description of GP-generated test problems \label{tab:test}}
\end{center}
\end{table}

\begin{figure}[h]
  \centering
  \includegraphics[width=0.47\textwidth]{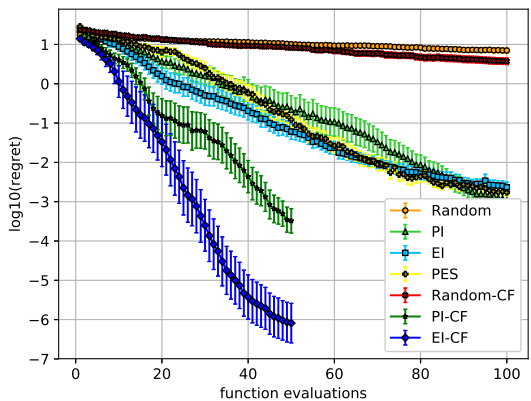}
  \caption{Expected log10(regret) in type-1 GP-generated test problems, estimated from  100 independent replications. 
 These problems use $\domain=[0,1]^4$, $g(y) = -||y - y_{\textnormal{obs}}|_2^2$, and $m=5$. 
 $\ei$ outperforms other methods by a large margin.
  \label{fig:test1}}
  
\end{figure}

\begin{figure}[h]
  \centering
  \includegraphics[width=0.47\textwidth]{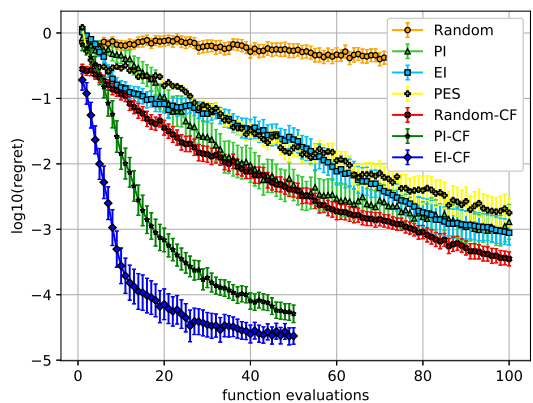}
  \caption{
 Expected log10(regret) in type-2 GP-generated test problems, estimated from  100 independent replications.  These problems use $\domain=[0,1]^3$, $g(y) = -\sum_j \exp(y_j)$, and $m=4$.
  \label{fig:test2}}
\end{figure}

\subsection{Standard Global Optimization Test Problems}
We assess our approach's performance on two standard benchmark functions from the global optimization literature: the Langermann  \citep{langermann} and Rosenbrock \cite{rosenbrock} functions. We refer the reader to Appendix \ref{app:test_functions} for a description of how these functions are adapted to our setting.
 
 Results of applying our method and benchmarks to these problems are shown on a logarithmic scale in Figures~\ref{fig:Langermann}  and ~\ref{fig:rosenbrock}.  As before, $\ei$ outperforms competing methods with respect to the final achieved regret. PI-CF and Random-CF also perform well compared to methods other than $\ei$. Moreover, in the Langermann test problem,  PI-CF outperforms EI-CF during the first 20 evaluations.

\begin{figure}[h]
  \centering
  \includegraphics[width=0.47\textwidth]{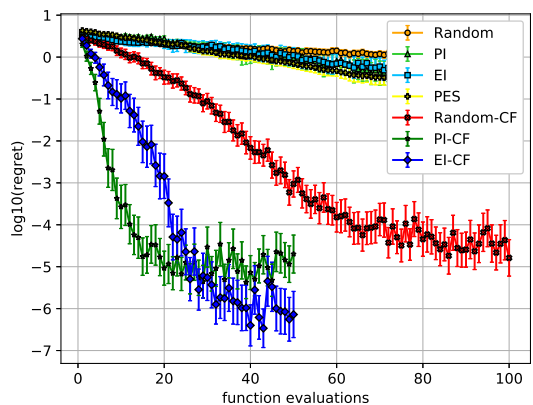}
  \caption{Expected log10(regret) in the Langermann test function, estimated from  100 independent replications.
 \label{fig:Langermann}}
\end{figure}

\begin{figure}[h]
  \centering
  \includegraphics[width=0.47\textwidth]{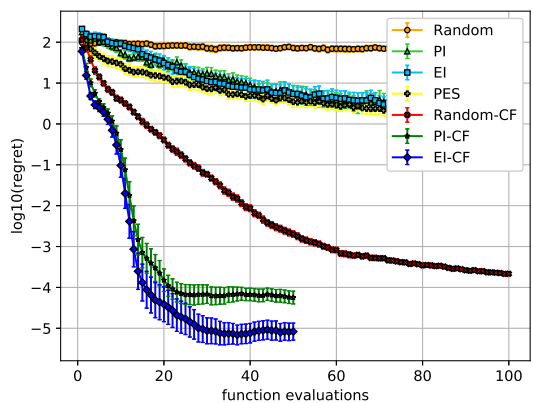}
  \caption{
 Expected log10(regret) in the Rosenbrock test problem, estimated from  100 independent replications.
  \label{fig:rosenbrock}}
\end{figure}
\subsection{Environmental Model Function}
The environmental model function was originally proposed by \citet{bliznyuk2008bayesian} and is now a well-known test problem in the literature of Bayesian calibration of expensive computer models. It models a chemical accident that has caused a pollutant to spill at two locations into a long
and narrow holding channel, and is based on a first-order approach to modeling the concentration of substances in such channels under the assumption that the channel can be approximated by an infinitely long one-dimensional system
with diffusion as the only method of transport. This leads to the concentration
representation
\begin{align*}
    c(s,t;M, D, L, \tau)
    = {} & \frac{M}{\sqrt{4\pi Dt}}\exp\left(\frac{-s^2}{4 Dt}\right) +\\
   & \frac{\mathbb{I}\{t > \tau\}M}{\sqrt{4\pi D(t-\tau)}}\exp\left(\frac{-(s-L)^2}{4 D(t-\tau)}\right),
\end{align*}
where $M$ is the mass of pollutant spilled at each location, $D$ is the diffusion rate in the channel, $L$ is the location of the second spill,  and $\tau$ is the time of the second spill. 

We observe $c(s,t;M_0, D_0, L_0, \tau_0)$ in a $3\times 4$ grid of values; specifically, we observe $\{c(s,t;M_0, D_0, L_0, \tau_0):(s,t)\in S\times T\}$, where $S=\{0, 1, 2.5\}$, $T = \{15, 30, 45, 60\}$, and $(M_0, D_0, L_0, \tau_0)$  are the underlying true values of these parameters. Since we assume noiseless observations, the calibration problem reduces to finding $(M, D, L, \tau)$ so that the observations at the grid minimize the sum of squared errors, i.e., our goal is to minimize
\begin{equation*}
    \sum_{(s,t)\in S\times T} (c(s,t;M_0, D_0, L_0, \tau_0) - c(s,t;M, D, L, \tau))^2.
\end{equation*}
In our experiment, we take $M_0 = 10$, $D_0 = 0.07$, $L_0 = 1.505$ and $ \tau_0 = 30.1525$. The search domain is $M\in[7,13]$, $D \in [0.02, 0.12]$, $L \in [0.01, 3]$ and $ \tau \in [30.01, 30.295]$.

Results from this experiment are shown in Figure~\ref{fig:test5}.  As above, EI-CF performs best, with PI-CF and Random-CF also significantly outperforming benchmarks that do not leverage the composite structure.

\begin{figure}[h]
  \centering
  \includegraphics[width=0.47\textwidth]{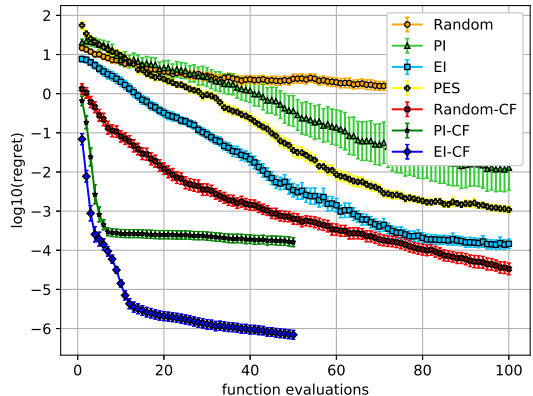}
  \caption{
 Expected log10(regret) in the environmental model function test problem, estimated from 100 independent replications.
  \label{fig:test5}}
\end{figure}


\section{Conclusion and Future Work}
We have proposed a novel Bayesian optimization approach for objective functions of the form $f(x)=g(h(x))$, where $h$ is a black-box expensive-to-evaluate vector-valued function, and $g$ is a real-valued function that can be cheaply evaluated. Our numerical experiments show that this approach may substantially outperform standard Bayesian optimization, while retaining computational tractability.

There are several relevant directions for future work. Perhaps the most evident is to understand whether other well-known acquisition functions can be generalized to our setting in a computationally tractable way. We believe this to be true for predictive entropy search \citep{hernandez2014predictive} and knowledge gradient \citep{scott2011correlated}. Importantly, these acquisition functions would allow noisy and decoupled evaluations of the components of $h$, thus increasing the applicability of our approach. However, in the standard Bayesian optimization setting, they are already computationally intensive and thus a careful analysis is required to make them computationally tractable in our setting.


\section*{Acknowledgements}
This work was partially supported by NSF CAREER CMMI-1254298, NSF CMMI-1536895 and 
AFOSR FA9550-15-1-0038. The authors also thank Eytan Bakshy for helpful comments.


\putbib
\end{bibunit}



\onecolumn

\setcounter{section}{0}
\appendix
\begin{bibunit}
\section{Unbiased Estimator of the Gradient of $\ei$}
\label{app:gradient_eicf}
In this section we prove that, under mild regularity conditions, 
 $\ei_n$ is differentiable and an unbiased estimator of its gradient can be efficiently computed. More concretely, we prove the following.
 \begin{proposition}
  \label{thm:ei_gradient}
 Suppose that $g$ is differentiable and let $\domain_0$ be an open subset of $\domain$ so that $\mu_n$ and $K_n$ are differentiable on $\domain_0$ and there exists a measurable function $\eta:\R^m\rightarrow \R$ satisfying 
\begin{enumerate}
    \item $\|\nabla g\left(\mu_{n}(x) + C_{n}(x)z\right)\|<\eta(z)$ for all $x\in\domain_0, \ z\in\R^m$,
    \item $\expectation[\eta(Z)]<\infty$, where $Z$ is a $m$-variate standard normal random vector.
\end{enumerate}
Further, suppose that for almost every $z\in\R^m$ the set $\{x\in\domain_0 : g\left(\mu_{n}(x) + C_{n}(x)z\right) = f_n^*\}$ is countable. Then,
$\ei_n$ is differentiable on $\domain_0$ and its gradient is given by
\begin{equation*}
    \nabla\ei_n(x) = \expectation_n\left[\gamma_n(x,Z)\right],
\end{equation*}
where the expectation is with respect to $Z$ and
\begin{equation*}
    \gamma_n(x,z) = \begin{cases}
\nabla g\left(\mu_{n}(x) + C_{n}(x)z\right), \textnormal{ if } g\left(\mu_{n}(x) + C_{n}(x)z\right) >  f_n^* ,\\
0, \textnormal{ otherwise.}
\end{cases}
\end{equation*}
\end{proposition}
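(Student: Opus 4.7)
The plan is to apply the Leibniz rule (differentiation under the integral sign, justified by dominated convergence) to the representation
\[
\ei_n(x) = \expectation\bigl[\phi(x, Z)\bigr], \qquad \phi(x, z) := \bigl\{g(\mu_n(x) + C_n(x) z) - f_n^*\bigr\}^+ ,
\]
where $Z \sim \mathcal{N}_m(0, I_m)$. Concretely, I will verify (i) pointwise a.s.\ differentiability of $x \mapsto \phi(x, z)$ at every $x_0 \in \domain_0$, (ii) that the pathwise derivative matches $\gamma_n$ almost surely, and (iii) a dominating integrable bound uniform on a neighborhood of $x_0$.

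For (i)--(ii), the function $u \mapsto u^+$ is smooth off $u = 0$ with derivative $\mathbf{1}\{u>0\}$, so off the kink set $K_z := \{x \in \domain_0 : g(\mu_n(x) + C_n(x) z) = f_n^*\}$ the chain rule yields
\[
\nabla_x \phi(x, z) \;=\; \mathbf{1}\{g(\mu_n(x) + C_n(x)z) > f_n^*\}\,\nabla_x\bigl[g(\mu_n(x) + C_n(x)z)\bigr],
\]
which is exactly $\gamma_n(x, z)$ (reading $\nabla g(\mu_n(x)+C_n(x)z)$ in the statement as the gradient of the composition in $x$). The countability hypothesis says $K_z$ is countable for a.e.\ $z$; Fubini applied to $\mathbf{1}\{(x,z) : x \in K_z\}$ on $\domain_0 \times \R^m$ (with Lebesgue $\times$ Gaussian measure) then yields that for a.e.\ $x_0$ the $z$-slice $\{z : x_0 \in K_z\}$ is Gaussian-null, which is exactly the a.s.\ differentiability needed to invoke Leibniz at that $x_0$.

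For (iii), fix $x_0 \in \domain_0$ and a compact neighborhood $U \subset \domain_0$ on which the Jacobians $\nabla \mu_n$ and $\nabla C_n$ are continuous, hence bounded by some constant $M$. The chain rule, combined with the hypothesis $\|\nabla g(\mu_n(x) + C_n(x)z)\| < \eta(z)$, gives $\|\gamma_n(x, z)\| \le \eta(z)\,M\,(1 + \|z\|)$ uniformly for $x \in U$. This majorant is integrable against the standard Gaussian measure provided $\eta(Z)(1 + \|Z\|)$ is integrable; strictly speaking the stated hypothesis $\expectation[\eta(Z)]<\infty$ should be read as including this mild moment strengthening (one may replace $\eta$ by $\eta(z)(1+\|z\|)$ in the statement), after which Leibniz's rule delivers differentiability of $\ei_n$ at $x_0$ together with $\nabla \ei_n(x_0) = \expectation_n[\gamma_n(x_0, Z)]$.

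The principal obstacle is the nonsmoothness introduced by the positive-part function, whose kink creates a set on which $\phi(\cdot, z)$ is only directionally differentiable. The countability assumption is tailored precisely to circumvent this: it lets Fubini convert a potentially awkward ``kink locus in $x$'' condition into the ``kink locus in $z$ has Gaussian measure zero at the given $x_0$'' condition that Leibniz actually requires. The remaining delicacy is accounting for the $C_n(x) z$ term in the chain rule, which contributes a factor linear in $\|z\|$ to the dominator; this is where I would be most careful, either by absorbing it into $\eta$ or by invoking the finiteness of Gaussian moments together with any mild growth condition on $\eta$.
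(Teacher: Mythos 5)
Your pathwise computation (smoothness of $u\mapsto u^{+}$ off the kink, chain rule, identification of the derivative with $\gamma_n$) matches the paper's, and your reading of $\nabla g(\mu_n(x)+C_n(x)z)$ as the gradient of the composition in $x$ is the only dimensionally consistent one. But the mechanism you use to justify the interchange is genuinely different from the paper's, and it buys you a strictly weaker conclusion. The paper does not convert the hypothesis via Fubini; it keeps the countability of the kink set $K_z=\{x\in\domain_0: g(\mu_n(x)+C_n(x)z)=f_n^{*}\}$ as a statement about the $x$-section for almost every $z$, and feeds the resulting structure (pathwise continuity, differentiability off a countable set, dominated derivative) into Theorem~1 of L'Ecuyer (1990). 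That theorem exploits the fact that a continuous function which is differentiable off a countable set with a dominated derivative is absolutely continuous along line segments, so the difference of expectations can be written as the expectation of a path integral of $\gamma_n$ and Fubini applied to that integral; this is how one gets a conclusion on all of $\domain_0$ rather than only at points where the $z$-section of the kink set is Gaussian-null. Your route --- Fubini on $\mathbf{1}\{x\in K_z\}$ to deduce that $\{z: x_0\in K_z\}$ is Gaussian-null for almost every $x_0$, then the elementary dominated-convergence Leibniz rule at such $x_0$ --- is correct as far as it goes, but it inherently cannot say anything at the exceptional (Lebesgue-null) set of $x_0$ where the kink is hit with positive probability, so it proves differentiability almost everywhere on $\domain_0$ rather than on all of $\domain_0$ as the proposition asserts. (This matches the informal main-text statement of Proposition~1, but falls short of the appendix statement you were asked to prove.) To close the gap you would need the absolute-continuity-along-lines argument, or an additional hypothesis such as $\prob\left(g(\mu_n(x_0)+C_n(x_0)Z)=f_n^{*}\right)=0$ at the specific $x_0$ of interest.

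One point where you are more careful than the paper: the chain rule produces a Jacobian factor from $x\mapsto \mu_n(x)+C_n(x)z$ that grows linearly in $\|z\|$, so the literal hypotheses $\|\nabla g(\mu_n(x)+C_n(x)z)\|<\eta(z)$ and $\expectation[\eta(Z)]<\infty$ do not by themselves dominate $\|\gamma_n(x,z)\|$; one needs $\expectation[\eta(Z)(1+\|Z\|)]<\infty$ (or to fold the Jacobian into $\eta$). The paper's proof invokes L'Ecuyer's theorem in one line and glosses over this; your explicit flagging of the required moment strengthening is a legitimate correction, and harmless since Gaussian vectors have all moments whenever $\eta$ has mild polynomial growth.
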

\begin{proof}
Since $g$ is differentiable and $\mu_n$ and $K_n$ are differentiable on $\domain_0$,  for any fixed $z\in\R^m$ the function $x \mapsto g\left(\mu_{n}(x) + C_{n}(x)z\right)$ is differentiable on $\domain_0$ as well. This in turn  implies that the function $x \mapsto \{g\left(\mu_{n}(x) + C_{n}(x)z\right) - f_n^*\}^{+}$ is continuous on $\domain_0$ and differentiable at every $x\in\domain_0$ such that $g\left(\mu_{n}(x) + C_{n}(x)z\right) \neq f_n^*$, with gradient equal to $\gamma(x,z)$. From our assumption that for almost every $z\in\R^m$ the set $\{x\in\domain : g\left(\mu_{n}(x) + C_{n}(x)z\right) = f_n^*\}$ is countable, it follows that for almost every $z$ the function $x \mapsto \{g\left(\mu_{n}(x) + C_{n}(x)z\right) - f_n^*\}^{+}$ is continuous on $\domain_0$ and differentiable on all $\domain_0$, except maybe on a countable subset. Using this, along with conditions 1 and 2, and Theorem 1 in \citet{l1990unified}, the desired result follows.
\end{proof}

We end this section by making a few remarks.
\begin{itemize}
    \item If $\mu$ and $K$ are differentiable on $\textnormal{int}(\domain)$, then one can show that $\mu_n$ and $K_n$ are differentiable on $\textnormal{int}(\domain)\setminus\{x_1,\ldots, x_n\}$.
    \item If one imposes the stronger condition $\expectation[\eta(Z)^2]<\infty$, then $\gamma_n$ has finite second moment, and thus this unbiased estimator of $\nabla\ei_n(x)$ can be used within stochastic gradient ascent to find a stationary point of $\ei_n$ \citep{bottou1998online}.
     \item In Proposition \ref{thm:ei_gradient}, the condition that for almost every $z\in\R^m$ the set $\{x\in\domain_0 : g\left(\mu_{n}(x) + C_{n}(x)z\right) = f_n^*\}$ is countable, can be weakened to the following more technical condition: for almost every $z\in\R^m$, every $x\in\domain_0$ and every $i\in\{1,\ldots , d\}$, there exists $\epsilon>0$ such that the set $\{x + h e_i: |h| < \epsilon \textnormal{ and } g\left(\mu_{n}(x + h e_i) + C_{n}(x + h e_i)z\right) = f_n^*\}$ is countable, where $e_i$ denotes the $i$-th canonical vector in $\R^d$.
\end{itemize}

\section{EI-CF and EI Do Not Coincide When $g$ Is Linear}
\label{app:eicf_linear}
Recall the following result that was stated in the main paper.
\begin{proposition}
\label{prop:eicf_linear_app}
Suppose that $g$ is given by $g(y) = w^\top y$ for some fixed $w\in\R^m$. Then,
\begin{equation*}
\ei_n(x) =  \Delta_n(x)\Phi\left(\frac{\Delta_n(x)}{\sigma_n(x)}\right) + \sigma_n(x)\varphi\left(\frac{\Delta_n(x)}{\sigma_n(x)}\right)
\end{equation*}
\end{proposition}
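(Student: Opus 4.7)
The plan is to reduce Proposition~\ref{prop:eicf_linear_app} to the classical closed-form expected improvement identity applied to a single scalar normal random variable. Starting from the definition $\ei_n(x) = \expectation_n[\{g(h(x)) - f_n^*\}^+]$ and substituting $g(y) = w^\top y$, one obtains $\ei_n(x) = \expectation_n[\{w^\top h(x) - f_n^*\}^+]$. Because the time-$n$ posterior on $h(x)$ is $m$-variate normal with mean $\mu_n(x)$ and covariance $K_n(x)$, the linear combination $w^\top h(x)$ is univariate normal with mean $w^\top \mu_n(x)$ and variance $w^\top K_n(x) w = \sigma_n(x)^2$.

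Assuming $\sigma_n(x) > 0$, I would write $w^\top h(x) = w^\top \mu_n(x) + \sigma_n(x) U$, where $U$ is standard normal under the time-$n$ posterior. Then $\{w^\top h(x) - f_n^*\}^+ = (\Delta_n(x) + \sigma_n(x) U)^+$, which is positive exactly when $U > -\Delta_n(x)/\sigma_n(x)$, so
\begin{equation*}
\ei_n(x) = \int_{-\Delta_n(x)/\sigma_n(x)}^{\infty} \bigl(\Delta_n(x) + \sigma_n(x) u\bigr)\varphi(u)\, du.
\end{equation*}
I would split this into two pieces. The first yields $\Delta_n(x)\bigl(1 - \Phi(-\Delta_n(x)/\sigma_n(x))\bigr) = \Delta_n(x)\Phi(\Delta_n(x)/\sigma_n(x))$ by the symmetry $1 - \Phi(-t) = \Phi(t)$. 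The second uses the standard antiderivative identity $\int_a^{\infty} u\,\varphi(u)\, du = \varphi(a)$ together with $\varphi(-\Delta_n(x)/\sigma_n(x)) = \varphi(\Delta_n(x)/\sigma_n(x))$, producing $\sigma_n(x)\varphi(\Delta_n(x)/\sigma_n(x))$. Summing the two pieces gives the claimed formula.

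The degenerate case $\sigma_n(x) = 0$ must be handled separately: there $w^\top h(x) = w^\top \mu_n(x)$ is deterministic, so $\ei_n(x) = \{\Delta_n(x)\}^+$, and this matches the stated expression by interpreting $\Delta_n(x)/\sigma_n(x)$ as $\pm\infty$ according to the sign of $\Delta_n(x)$, with $\varphi$ vanishing at infinity. There is no serious obstacle; the calculation is precisely the textbook derivation of classical expected improvement, and the content of the proposition is just that, for a linear outer function $g$, posterior expectations over the $m$-variate $h(x)$ collapse to expectations over the scalar $w^\top h(x)$, whose distribution is fully characterized by $w^\top\mu_n(x)$ and $\sigma_n(x)$.
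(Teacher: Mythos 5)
Your proposal is correct and follows essentially the same route as the paper: the paper's one-line verification is precisely the observation that $w^\top h(x)$ is univariate normal with mean $w^\top\mu_n(x)$ and variance $w^\top K_n(x)w$ under the time-$n$ posterior, after which the classical expected improvement formula applies. You simply make explicit the textbook integral computation (and the degenerate case $\sigma_n(x)=0$) that the paper leaves implicit.
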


The resemblance of the above expression to the classical EI acquisition function may make one think that, in the above case, $\ei$ coincides, in some sense, with the classical $\eis$ under an appropriate choice of the prior distribution. 

Indeed, suppose that we set a single-output GP prior with mean $w^\top\mu(x)$ and covariance function $w^\top K_n(x)w$ of $f$ (and fix its hyperparameters), then
\begin{align*}
 \expectation\left[\left\{w^\top h(x)-f_n^*\right\}^{+}\mid x_i, w^\top h(x_i) = y_i : i=1,\ldots n\right]
  = \expectation\left[\left\{f(x)-f_n^*\right\}^{+}\mid  x_i,f(x_i)=y_i : i=1,\ldots n\right].
\end{align*}

However, if we condition on $h(x_i)$ rather than $w^\top h(x_i)$ in the left-hand side, then the equality is no longer true, even if the values on which we condition satisfy $w^\top h(x_i) = y_i$:
\begin{align*}
 \expectation\left[\left\{w^\top h(x)-f_n^*\right\}^{+}\mid x_i, h(x_i) : i=1,\ldots n\right]
  \ne \expectation\left[\left\{f(x)-f_n^*\right\}^{+}\mid  x_i,f(x_i)=y_i : i=1,\ldots n\right].
\end{align*}

Thus, even if we initiate optimization using EI-CF and a parallel optimization using EI with a single-output Gaussian process as described above, their acquisition functions will cease to agree once we condition on the results of an evaluation.

\section{Probability of Improvement for Composite Functions}
\label{app:picf}
In this section, we formally define the probability of improvement for composite functions (PI-CF) acquisition function and specify its implementation details used within our experimental setup.

Analogously to EI-CF, PI-CF is simply defined as the probability of improvement evaluated with respect to the implied posterior distribution on $f$ when we model $h$ as a multi-output GP:
\begin{equation*}
    \picf(x) = \prob_n\left(g(h(x))\geq f_n^* + \delta\right),
\end{equation*}
where $\prob_n$ denotes the conditional probability given the available observations at time $n$, $\{\left(x_i,h(x_i)\right)\}_{i=1}^n$, and $\delta>0$ is a parameter to be specified. As we did with $\ei$, we can express $\picf(x)$ as
\begin{equation*}
    \picf(x) = \prob_n\left(g(\mu_n(x) + C_n(x)Z)\geq f_n^* + \delta\right),
\end{equation*}
where $Z$ is a $m$-variate standard normal random vector under the time-$n$ posterior distribution. 

We can further rewrite $\picf(x)$ using an indicator function $\mathbb{I}$ as
\begin{equation*}
    \picf(x) = \expectation_n\left[\mathbb{I}\{g(\mu_n(x) + C_n(x)Z)\geq f_n^* + \delta\}\right],
\end{equation*}
which implies that $\picf$ can be computed with arbitrary precision following a Monte Carlo approach as well:
\begin{equation*}
    \picf(x) \approx \frac{1}{L}\sum_{\ell=1}^L\mathbb{I}\left\{g\left(\mu_n(x) + C_n(x)Z^{(\ell)}\right)\geq f_n^* + \delta\right\},
\end{equation*}
where $Z^{(1)},\ldots, Z^{(L)}$ are draws of an $m$-variate standard normal random vector. However, an unbiased estimator of the gradient of PI-CF cannot be computed following an analogous approach to the one used with EI-CF. In fact, $\nabla\mathbb{I}\{g(\mu_n(x) + C_n(x)Z)\geq f_n^* + \delta\}=0$ at those points for which the function $x\mapsto \mathbb{I}\{g(\mu_n(x) + C_n(x)Z)>\geq f_n^* + \delta\}$ is differentiable. Thus, even if $\nabla\mathbb{I}\{g(\mu_n(x) + C_n(x)Z)\geq f_n^* + \delta\}$ exists, in general
\begin{equation*}
    \nabla\picf(x) \neq \expectation_n\left[\nabla\mathbb{I}\{g(\mu_n(x) + C_n(x)Z)\geq f_n^* + \delta\}\right], 
\end{equation*}
unless $\nabla\picf(x)=0$. 

In our experiments, we adopt a sample average approximation \cite{kim2015guide} scheme for approximately maximizing PI-CF. At each iteration we fix $Z^{(1)},\ldots, Z^{(L)}$ and choose the next point to evaluate as 
\begin{equation*}
   x_{n+1}\in \arg\max_{x\in\domain}\frac{1}{L}\sum_{\ell=1}^L\mathbb{I}\left\{g\left(\mu_n(x) + C_n(x)Z^{(\ell)}\right)\geq f_n^* + \delta\right\},
\end{equation*}
where $L=50$ and $\delta = 0.01$. We solve the above optimization problem using the derivative-free optimization algorithm, CMA-ES \cite{hansen2016cma}.

\section{Description of Langermann and Rosenbrock Test Problems}
\label{app:test_functions}
The following pair of test problems are standard benchmarks in the global optimization literature. In this section, we describe in detail how they are adapted our setting, i.e., how we express them as composite functions.
\subsection{Langermann Function}
The Langermann function \citep{langermann_supp} is defined by
$f(x) = g(h(x))$ where
\begin{align*}
    h_j (x) &= \sum_{i=1}^d(x_i - A_{ij})^2, \ j=1,\ldots m, \\
    g(y) &= -\sum_{j=1}^m c_j\exp(-y_j/\pi)\cos(\pi y_j).
\end{align*}
In our experiment we set $d=2$, $m=5$, $c=(1, 2, 5, 2, 3)$,
\begin{equation*}
    A = \begin{pmatrix} 
3 & 5 & 2 & 1 & 7 \\
5 & 2 & 1 & 4 & 9
\end{pmatrix},
\end{equation*}
and $\domain = [0,10]^2$.

\subsection{Rosenbrock Function}
The Rosenbrock function \citep{rosenbrock_supp} is 
\begin{equation*}
    f(x) = - \sum_{j=1}^{d-1}100(x_{j+1} - x_j^2)^2 +(x_j - 1)^2
\end{equation*}
We adapt this problem to our framework by taking $d=5$ and defining $h$ and $g$ by
\begin{align*}
    h_j(x) &= x_{j+1} - x_j^2, \ j=1,\ldots, 4, \\
    h_{j+4}(x) &= x_j, \ j=1,\ldots, 4, \\
g(y) &= -\sum_{j=1}^4100y_{j}^2 + (y_{j+4}-1)^2.
\end{align*}
\section{Asymptotic Consistency of the Expected Improvement for Composite Functions}
\label{app:eicf_convergence}
\subsection{Basic Definitions and Assumptions}
In this section we prove that, under suitable conditions, the expected improvement sampling policy is asymptotically consistent in our setting. In the standard Bayesian optimization setting, this was first proved under quite general conditions by \citet{vazquez2010convergence_supp}. Later, \citet{bull2011convergence} provided convergence rates for several expected improvement-type policies both with fixed hyperparameters and hyperparameters estimated from the data in suitable way. Here, we restrict to prove asymptotic consistency, under fixed hyperparameters, following a similar approach to \citet{vazquez2010convergence_supp}. In particular, we provide a generalization of the No-Empty-Ball (NEB) condition, under which the expected improvement sampling policy is guaranteed to be asymptotically consistent in our setting. In the reminder of this work $\{x_n\}_{n\in\N}$ denotes the sequence of points at which $h$ is evaluated, which is not necessarily given by the expected improvement acquisition function, unless explicitly stated.
\begin{definition}[Generalized-No-Empty-Ball property]
\pfcomment{The property is of $K$ but then the condition is about $K_n$.  Are we then fixing a particular sampling policy?  Any sampling policy?} \racomment{Yeah, the property most hold for all sequences}
We shall say that a kernel, $K$, satisfies the Generalized-No-Empty-Ball (GNEB) property if, for all sequences $\{x_n\}_\N$  in $\domain$ and all $\tilde{x}\in\domain$, the following assertions are equivalent:
\begin{enumerate}
    \item $\tilde{x}$ is a limit point of $\{x_n\}_{n\in\N}$.
    \item There exists a subsequence of  $\{K_n(\tilde{x})\}_{n\in\N}$ converging to a singular matrix.
\end{enumerate}
\end{definition}

We highlight that, if $K$ is diagonal, i.e. if the output components are independent of each other, the GNEB property holds provided that at least one of its components satisfies the standard NEB property. 
\pfcomment{is this really iff?  Or does it just go one direction?  Seems like when you have diagonal $K$, one component satisfying the NEB property is enough to guarantee equivalence between $\tilde{x}$ being a limit point and that component of $K_n(\tilde{x})$ converging to $0$. If the other components of $K_n(\tilde{x})$ converged to something non-zero, you would still have $K_n(\tilde{x})$ converging to a singular matrix.} \racomment{Yeah, this was wrong. I think it is right now.}
In particular, the following result is a corollary of Proposition 10 in  \citet{vazquez2010convergence_supp}.
\begin{corollary}
Suppose $K$ is diagonal and at least one of its components has a spectral density whose inverse has at most polynomial growth. Then, $K$ satisfies the GNEB property. 
\end{corollary}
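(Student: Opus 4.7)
The plan is to reduce the matrix-valued GNEB property to the scalar NEB property applied componentwise, exploiting the fact that $K$ is diagonal. First, I would observe that a diagonal kernel yields $m$ independent scalar GP regressions, so $K_n(\tilde{x})$ is diagonal with entries $k_n^{(i)}(\tilde{x},\tilde{x})$, each being a scalar posterior variance. Since posterior variances are monotone non-increasing in $n$ and bounded below by $0$, each $k_n^{(i)}(\tilde{x},\tilde{x})$ converges, hence $K_n(\tilde{x})$ converges along the full sequence to a diagonal limit $K_\infty(\tilde{x})$; in particular, subsequential limits coincide with the full-sequence limit, so the quantifier ``there exists a subsequence\ldots'' in condition~(2) can be replaced by full convergence. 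A diagonal positive semidefinite matrix is singular iff some diagonal entry equals $0$, so condition~(2) reduces to $k_\infty^{(i)}(\tilde{x},\tilde{x}) = 0$ for some index $i$.

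Next, I would invoke Proposition~10 of \citet{vazquez2010convergence_supp} on the distinguished component: the polynomial-growth hypothesis on the inverse spectral density of $k^{(j)}$ implies the scalar NEB property for $k^{(j)}$, namely $k_n^{(j)}(\tilde{x},\tilde{x}) \to 0$ iff $\tilde{x}$ is a limit point of $\{x_n\}$. With this in hand, the forward implication (1~$\Rightarrow$~2) is essentially immediate: if $\tilde{x}$ is a limit point, scalar NEB for $k^{(j)}$ forces $k_\infty^{(j)}(\tilde{x},\tilde{x}) = 0$, so $K_\infty(\tilde{x})$ is singular and the full sequence itself witnesses~(2).

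The main obstacle is the reverse implication (2~$\Rightarrow$~1). Singularity of $K_\infty(\tilde{x})$ only tells us that $k_\infty^{(i)}(\tilde{x},\tilde{x}) = 0$ for \emph{some} $i$, which need not be the distinguished component $j$. If $i = j$, scalar NEB closes the argument; but if $i \ne j$, a kernel $k^{(i)}$ that is not assumed to satisfy scalar NEB could, a priori, have its posterior variance collapse to $0$ at a point $\tilde{x}$ that never becomes a limit point of the sampled sequence. The cleanest repair is to impose the spectral-density hypothesis on \emph{every} diagonal component of $K$, after which scalar NEB applies symmetrically to whichever component vanishes in the limit and the reverse direction follows by the same argument. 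I would therefore flag this point in the write-up and either strengthen the hypothesis to ``every component'' or, alternatively, weaken the GNEB conclusion to the one-sided implication (1~$\Rightarrow$~2) that is all the later consistency proof actually consumes.
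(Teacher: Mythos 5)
The paper offers no proof of this corollary at all---it is asserted as an immediate consequence of Proposition 10 of Vazquez and Bect (2010)---so there is no argument of the authors' to compare yours against step by step. Your reduction is the natural (and surely the intended) one: diagonality makes $K_n(\tilde{x})$ a diagonal matrix of $m$ independent scalar posterior variances; Loewner monotonicity (Lemma~\ref{lemma_order}, $K_{n+1}(x)\lesssim K_n(x)\lesssim K(x)$) upgrades subsequential convergence to convergence of the full sequence; and a diagonal positive semidefinite limit is singular iff some diagonal entry vanishes. Your forward implication is fine, and in fact needs only continuity of $K$ rather than NEB---this is exactly the step $1\Rightarrow 2$ in the paper's Lemma~\ref{main_prop}. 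More importantly, the gap you flag in $2\Rightarrow 1$ is genuine, and it is a gap in the \emph{statement}, not merely in your write-up: if some component $i\neq j$ fails scalar NEB (the squared-exponential kernel is the canonical offender), there exist a sequence and a non-limit point $\tilde{x}$ with $k^{(i)}_n(\tilde{x},\tilde{x})\to 0$; the limiting diagonal matrix is then singular while $\tilde{x}$ is not a limit point, so GNEB fails. The hypothesis really does need to be imposed on every component, which is your primary repair.

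Your proposed fallback, however, is backwards. You suggest one could instead retain only the implication $1\Rightarrow 2$ because that is ``all the later consistency proof actually consumes.'' It is not: the consistency theorem invokes GNEB in the form ``$x_\star$ is not a limit point, hence the subsequential limit $K_\star$ is nonsingular,'' which is the contrapositive of $2\Rightarrow 1$---precisely the direction your counterexample breaks. Full nonsingularity of $K_\star$ is then essential to give $\mu_\star+C_\star Z$ full support so that $\prob\left(\mu_\star+C_\star Z\in S_\epsilon\right)>0$; if only the distinguished component's variance were bounded away from zero, the support would be a proper affine subspace that could miss $S_\epsilon$ entirely. So the one-sided weakening does not rescue the downstream theorem; strengthening the hypothesis to all components (or, equivalently, requiring each scalar component to satisfy NEB) is the correct fix.
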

Thus, the GNEB property holds, in particular, if $K$ is diagonal and at least one of its components is a Mat\'ern kernel \citep{stein2012interpolation}.
\pfcomment{I believe this is true, even though I think the claim above is not iff.  You just need diagonal + each component satisfying NEB implies GNEB, which seems true.} \racomment{One component being a Matern kernel should be enough.}

Now we introduce some additional notation. We denote by $\mathcal{H}$ to the the reproducing kernel Hilbert space associated with $K$ \cite{alvarez2012kernels_supp}. As is standard in Bayesian optimization, we make a slight abuse of notation and denote by $h$ both a fixed element of $\mathcal{H}$ and a 
random function distributed according to a Gaussian process with mean $\mu$ and kernel $K$ (below we assume $\mu=0$); we shall explicitly state whenever $h$ is held fixed. As before, we denote $K_n(x,x)$ by $K_n(x)$. Finally, we make the following standing assumptions.
\begin{enumerate}
    \item $\domain$ is a compact subset of $\R^d$, for some $d\geq 1$.
    \item The prior mean function is identically $0$. \pfcomment{Would it be possible to relax this assumption about the prior mean?  Some reviewers may be fine with it, but others might complain that it is not necessarily true in some applications.  Even if we can't relax it, is there some way to comment on whether the assumption is important?}  \item $K$ is continuous, positive definite, and satisfies the GNEB property.
    \item $g:\R^m\rightarrow\R$ is continuous.
    \item For any bounded sequences $\{a_n\}_{n\in\N} \subset \R^m$ and $\{A_n\}_{n\in\N}\subset \R^{m\times m}$, $\expectation[\sup_{n}{|g(a_n + A_nZ)|}]<\infty$, where the expectation is over $Z$ and $Z$ is a $m$-dimensional standard normal random vector.
\end{enumerate}
The assumption that $g$ is continuous guarantees that $f=g\circ h$ is continuous, provided that $h$ is continuous as well. Moreover, in this case, since $\domain$ is compact, $f$ attains its maximum value in $\domain$; we shall denote this maximum value by $M$, i.e., $M = \max_{x \in \domain}f(x)$.
\subsection{Preliminary Results}
Before proving  asymptotic consistency, we prove several auxiliary results. We begin by proving that $\ei_n$ is continuous.
\begin{proposition}
\pfcomment{Instead of $\rightarrow$, it is better to use $\mapsto$ for function definitions.} \racomment{Is that standard? I've always seen $:\rightarrow$, and $\mapsto$ is used to specify the decision rule.}
For any $n\in\N$, the function $\ei_n:\domain \rightarrow \R$ defined by
\begin{equation*}
    \ei_n(x) = \expectation[\{g\left(\mu_{n}(x) + C_{n}(x)Z\right) - f_n^*\}^{+}],
\end{equation*}
where the expectation is over $Z$ and $Z$ is a $m$-dimensional standard normal random vector, is continuous.  \pfcomment{In the expectation, do we want a subscript $n$?  Or are we assuming that we always are conditioning on the time-$n$ posterior when we take an expectation.} \racomment{Yeah, since somehow we are fixing the sequence $\{x_n\}_n$ in advance, we think of $\mu_n(x)$ and $K_n(x)$ as deterministic. Is there a good way to say this without making it confusing?}
\end{proposition}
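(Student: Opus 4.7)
The plan is to apply the dominated convergence theorem along an arbitrary sequence $x_k \to x$ in $\domain$. A first observation is that the value of $\ei_n(x)$ depends on $C_n(x)$ only through $C_n(x) C_n(x)^\top = K_n(x)$, since the distribution of $\mu_n(x) + C_n(x) Z$ is determined by $\mu_n(x)$ and $K_n(x)$ alone. This lets me replace the Cholesky factor, which may be discontinuous in $x$ at points where $K_n(x)$ is singular (for instance at previously evaluated points $x_1,\ldots,x_n$), by the symmetric positive semidefinite square root $K_n(x)^{1/2}$, which depends continuously on $K_n(x)$ on the PSD cone. Both choices yield the same value of $\ei_n(x)$, so without loss of generality I may work with $K_n(x)^{1/2}$.

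Second, I would verify that $\mu_n$ and $K_n$, and therefore $K_n^{1/2}$, are continuous on $\domain$. Continuity of $\mu_n$ and $K_n$ follows from the continuity of $\mu$ and $K$ via the standard closed-form GP update formulas, and continuity of the matrix square root on the PSD cone then gives continuity of $K_n^{1/2}$.

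Third, for each fixed $z \in \R^m$, pointwise convergence
\begin{equation*}
\left\{ g\left( \mu_n(x_k) + K_n(x_k)^{1/2} z \right) - f_n^* \right\}^+ \to \left\{ g\left( \mu_n(x) + K_n(x)^{1/2} z \right) - f_n^* \right\}^+
\end{equation*}
follows from continuity of $g$, of $\mu_n$ and $K_n^{1/2}$, and of the positive-part function. To finish via dominated convergence, I would note that $\{\mu_n(x_k)\}_k$ and $\{K_n(x_k)^{1/2}\}_k$ are bounded sequences (continuous images of the compact $\domain$), so standing assumption 5 yields
\begin{equation*}
\expectation\!\left[\,\sup_k \left| g\!\left( \mu_n(x_k) + K_n(x_k)^{1/2} Z \right) \right| \,\right] < \infty,
\end{equation*}
which, together with $|f_n^*| < \infty$, provides an integrable dominating function for the nonnegative integrands above. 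Dominated convergence then gives $\ei_n(x_k) \to \ei_n(x)$, establishing continuity of $\ei_n$ at the arbitrary point $x$.

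The main obstacle is precisely the potential discontinuity of the Cholesky factor $C_n$ at singular $K_n$, which would foil a naive dominated convergence argument carried out directly with $C_n$. The workaround of passing to the symmetric PSD square root, which inherits continuity from $K_n$, is what makes the rest of the argument routine; the remaining work is a straightforward combination of the standing compactness, continuity, and uniform integrability (assumption 5) hypotheses.
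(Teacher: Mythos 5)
Your proof is correct and follows essentially the same route as the paper's: fix a convergent sequence $x_k'\to x_\infty'$, use continuity of $\mu_n$, the square root of $K_n$, $g$, and the positive-part map to get almost-sure convergence of the integrand, dominate it by $\sup_k|g(\mu_n(x_k')+C_n(x_k')Z)|+|f_n^*|$ using the boundedness of the convergent sequences together with standing assumption 5, and conclude by dominated convergence. The one place you go beyond the paper is in handling the square-root factor: the paper simply asserts that continuity of $K$ makes $C_n$ continuous, which glosses over the fact that the Cholesky factorization need not be continuous (or uniquely defined) where $K_n(x)$ is singular --- and $K_n(x_i)=0$ at every previously evaluated point under noiseless observations. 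Your observation that $\ei_n(x)$ depends on the factor only through $K_n(x)$ itself, so one may substitute the symmetric positive semidefinite square root $K_n(x)^{1/2}$, which \emph{is} continuous on the PSD cone, cleanly repairs this step; alternatively one could note that at the only singular values arising here ($K_n(x)\to 0$) the Cholesky factor does converge to $0$ since $\|C_n\|_F^2=\mathrm{tr}(K_n)$. Either way, your argument is a slightly more careful version of the paper's proof rather than a different one.
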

\begin{proof}
Let $\{x'_k\}_{k\in\N}\subset\domain$ be a convergent sequence with limit $x'_\infty$.
\pfcomment{Have we defined $C_n$?  Also, to say that $K$ continuous implies $\mu_n$ and $C_n$ are continuous, aren't there several steps there?  For example, there is a result we cite in an older paper in Jialei's parallel EI paper that says that the Cholesky decomposition is a continuous function.} \racomment{We defined $C_n$ in the main paper. }
Since $K$ is continuous, $\mu_n$ and $C_n$ are both continuous functions of $x$, and thus $\mu_{n}(x'_k)\rightarrow \mu_{n}(x'_\infty)$ and $C_{n}(x'_k) \rightarrow C_{n}(x'_\infty)$ as $k\rightarrow\infty$. Moreover, since $g$ is continuous too, it follows by the continuous mapping theorem \citep{billingsley2013convergence} that
\begin{equation*}
  \{g(\mu_{n}(x'_k) + C_{n}(x'_k)Z) - f_n^*\}^{+} \rightarrow \{g(\mu_{n}(x'_\infty) + C_{n}(x'_\infty)Z) - f_n^*\}^{+}  
\end{equation*}
almost surely as $k\rightarrow\infty$.

Now observe that
\begin{equation*}
     \{g(\mu_{n}(x'_k) + C_{n}(x'_k)Z) - f_n^*\}^{+} \leq \sup_k|g(\mu_{n}(x'_k) + C_{n}(x'_k)Z)| + |f_n^\star|.
\end{equation*}
Moreover, the sequences $\{\mu_{n}(x'_k)\}_{k\in\N}$ and $\{C_{n}(x'_k)\}$ are both convergent (with finite limits) and thus are bounded. Hence, the above inequality, along with assumption 5 and the dominated convergence theorem \citep{williams1991}, imply that
\begin{equation*}
  \expectation[\{g(\mu_{n}(x'_k) + C_{n}(x'_k)Z) - f_n^*\}^{+}] \rightarrow \expectation[\{g(\mu_{n}(x'_\infty) + C_{n}(x'_\infty)Z) - f_n^*\}^{+}],  
\end{equation*}
as $k\rightarrow\infty$, i.e., $\ei_{n}(x'_k)\rightarrow \ei_{n}(x'_\infty)$. Hence, $\ei_n$ is continuous.
\end{proof}
\begin{lemma}
\label{main_prop}
Let $\{x_n\}_{n\in\mathbb{N}}$ and $\{x'_n\}_{n\in\mathbb{N}}$ be two sequences in $\domain$. Assume that $\{x'_n\}_{n\in\mathbb{N}}$ is convergent, and denote by $x_\infty'$ its limit. Then, each of the following conditions implies the next one:
\begin{enumerate}
\item $x_\infty'$ is a limit point of $\{x_n\}_{n\in\mathbb{N}}$.
\item $K_n(x_n') \rightarrow 0$ as $n\rightarrow\infty$.
\item For any fixed $h\in\mathcal{H}$, $\mu_{n}(x_n') \rightarrow h(x_\infty')$ as $n\rightarrow \infty$.
\end{enumerate}
\end{lemma}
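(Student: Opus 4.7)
The plan is to prove the chain $(1) \Rightarrow (2) \Rightarrow (3)$ separately, exploiting two classical tools from Gaussian process regression: monotonicity of the posterior covariance in the positive-semidefinite order under additional observations, and an RKHS-interpolation inequality that bounds the pointwise kriging error by the posterior variance.

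For $(1) \Rightarrow (2)$, I would extract a subsequence $x_{n_k} \to x_\infty'$ guaranteed by hypothesis~(1). Since for every $k$ and every $n \geq n_k$ the time-$n$ posterior conditions on observations that include $x_{n_k}$, and conditioning on additional data can only reduce the covariance in the positive-semidefinite order, we have
\begin{equation*}
K_n(x_n') \;\preceq\; K(x_n', x_n') - K(x_n', x_{n_k})\, K(x_{n_k}, x_{n_k})^{-1}\, K(x_{n_k}, x_n').
\end{equation*}
Fix $k$ and let $n \to \infty$: by continuity of $K$, the right-hand side converges to the analogous expression with $x_n'$ replaced by $x_\infty'$. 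Letting $k \to \infty$ and using continuity of $K$ at $(x_\infty', x_\infty')$, that limiting expression tends to $0$. A standard diagonal-selection argument combining these two limits yields $K_n(x_n') \to 0$.

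For $(2) \Rightarrow (3)$, I would invoke the multi-output analogue of the Golomb--Weinberger inequality: for any fixed $h \in \mathcal{H}$ and any $v \in \R^m$,
\begin{equation*}
\bigl|\langle v,\, h(x) - \mu_n(x)\rangle\bigr|^2 \;\leq\; \|h\|_\mathcal{H}^2 \cdot \langle v,\, K_n(x)\, v \rangle.
\end{equation*}
Letting $v$ range over the standard basis of $\R^m$ and summing gives
\begin{equation*}
\|h(x) - \mu_n(x)\|^2 \;\leq\; \|h\|_\mathcal{H}^2 \cdot \operatorname{tr}\bigl(K_n(x)\bigr).
\end{equation*}
Setting $x = x_n'$, hypothesis~(2) together with equivalence of norms on $\R^{m\times m}$ gives $\operatorname{tr}(K_n(x_n')) \to 0$, hence $\|h(x_n') - \mu_n(x_n')\| \to 0$. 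Every $h \in \mathcal{H}$ is continuous on $\domain$ (since $K$ is), so $h(x_n') \to h(x_\infty')$, and the triangle inequality yields $\mu_n(x_n') \to h(x_\infty')$.

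The main obstacle I anticipate is the diagonal-selection argument in the first implication: $K_n(x_n')$ depends on $n$ both through the number of observations (where PSD monotonicity helps) and through the evaluation point (where only continuity helps), so the two limits must be interleaved carefully. A secondary technical point is transferring the scalar Golomb--Weinberger bound to the multi-output setting via the operator-valued kernel formalism of \citet{alvarez2012kernels_supp}; this step is routine but should be stated cleanly, since the operator norm of $K_n(x_n')$ vanishing is what ultimately closes the argument.
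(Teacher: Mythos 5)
Your proof is correct, and the second implication ($(2)\Rightarrow(3)$) is essentially identical to the paper's: the paper also uses the RKHS Cauchy--Schwarz bound $\|h(x_n')-\mu_n(x_n')\|_2\le\|K_n(x_n')\|_2^{1/2}\|h\|_{\mathcal H}$ (stated as a standalone lemma) followed by the triangle inequality and continuity of $h$; your trace variant obtained by summing over basis vectors is an equivalent form. For $(1)\Rightarrow(2)$ you take a genuinely different route. The paper's key lemma is that $K_n(x)\lesssim\cov(h(x)-h(x_i))$ for any observed index $i\le n$ (proved via the law of total covariance, avoiding any explicit posterior formula); it then picks a \emph{moving} index $m_n=\max\{k_\ell:k_\ell\le n\}$ along the convergent subsequence so that a single limit $K(x_n',x_n')+K(x_{m_n},x_{m_n})-2K(x_n',x_{m_n})\to 0$ closes the argument, at the cost of a case split on whether $x_\infty'$ actually occurs in $\{x_n\}$. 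You instead fix one subsequence element $x_{n_k}$, bound $K_n(x_n')$ by the explicit one-point kriging covariance given $x_{n_k}$ alone, and take an iterated limit in $n$ then $k$; this gives a tighter intermediate bound, avoids the paper's case split, but requires invertibility of $K(x_{n_k},x_{n_k})$ (guaranteed here since $K$ takes values in $\mathcal S_{++}^m$). The ``diagonal-selection'' step you flag as the main obstacle is in fact routine: since $0\preceq K_n(x_n')\preceq R_{n,k}$ and $0\preceq A\preceq B$ implies $\|A\|_2\le\|B\|_2$, you get $\limsup_n\|K_n(x_n')\|_2\le\|R_{\infty,k}\|_2$ for every $k$, and letting $k\to\infty$ finishes; no interleaving of indices is needed. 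Both arguments rest on the same two pillars (PSD monotonicity of the posterior covariance under additional conditioning, and continuity of $K$), so the difference is one of bookkeeping rather than substance, but your version is arguably cleaner in the first implication.
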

\begin{proof}
First we prove that 1 implies 2. If  $x_\infty'$ is an element of $\{x_n\}_{n\in\mathbb{N}}$, say $x_\infty' = x_{n_0}$, then, for $n\geq n_0$, we have
\begin{equation*}
    K_n(x_n') \lesssim K_{n_0}(x_n')\rightarrow K_{n_0}(x_\infty')=K_{n_0}(x_{n_0})=0,
\end{equation*}
where we use Lemma \ref{lemma_order} and the fact that $K_{n_0}$ is continuous. Now assume $x_\infty'$ is not an element of $\{x_n\}_{n\in\mathbb{N}}$. Let $\{x_{k_n}\}_{n\in\mathbb{N}}$ be a subsequence of $\{x_n\}_{n\in\mathbb{N}}$ converging to $x_\infty'$ and let $m_n = \max\{k_\ell: k_\ell \leq n\}$. Then, by Lemmas \ref{lemma_cov_no_n} and \ref{lemma_ineq_cov} we obtain
\begin{equation*}
    K_{n}(x_n') = \cov(h(x_n') - \mu_{n}(x_n')) \lesssim \cov(h(x_n') - h(x_{m_n})).
\end{equation*}
Finally, since $x_\infty'$ is not an element of $\{x_n\}_{n\in\mathbb{N}}$, $m_n \rightarrow \infty$, and it follows from the continuity of $K$ that
\begin{equation*}
   \cov(h(x_n') - h(x_{m_n}))
     = K(x_n', x_n') + K(x_{m_n}, x_{m_n}) - 2K(x_n', x_{m_n}) \rightarrow 0,
\end{equation*}
and thus $K_n(x_n')\rightarrow 0$.

Now we prove that 2 implies 3. Using the Cauchy-Schwarz inequality in $\mathcal{H}$, we obtain
\begin{equation*}
    \|h(x_n') - \mu_n(x_n')\|_2 \leq \|K_n(x_n')\|_2^\frac{1}{2}\|h\|_\mathcal{H},
\end{equation*}
 Thus,
\begin{align*}
    \|h(x_\infty') - \mu_n(x_n')\|_2 &\leq \|h(x_\infty') - h(x_n')\|_2 + |h(x_n')- \mu_n(x_n')\|_2\\
    &\leq \|h(x_\infty') - h(x_n')\|_2 + \|K_n(x_n')\|_2^\frac{1}{2}\|h\|_\mathcal{H} \rightarrow 0
\end{align*}
since $h$ is continuous.
\end{proof}

\begin{lemma}
\label{ei_converges_zero}
Let $\nu_n = \max_{x\in\domain}\ei_{n}(x)$. Then, for all $h\in\mathcal{H}$, $\lim\inf_{n\rightarrow\infty}\nu_n \rightarrow 0$.
\end{lemma}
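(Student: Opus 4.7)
My plan is to argue that, along a subsequence of indices, the maximizer of $\ei_n$ converges to an already-explored point, so the posterior distribution there collapses onto the true function value and the improvement it can offer vanishes. I will work under the assumption (implicit in the context of Theorem \ref{main_theorem}) that $x_{n+1} \in \arg\max_{x \in \domain} \ei_n(x)$, so that $\nu_n = \ei_n(x_{n+1})$. Since $\domain$ is compact, $\{x_{n+1}\}_{n \in \N}$ admits a convergent subsequence $x_{n_k+1} \to \tilde{x} \in \domain$; because each $x_{n_k+1}$ belongs to $\{x_n\}_{n \in \N}$, the limit $\tilde{x}$ is a limit point of the full evaluation sequence.

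I would then invoke Lemma \ref{main_prop}, whose proof carries over when the primed sequence is taken along the subsequence $\{x_{n_k+1}\}_k$: implication $1 \Rightarrow 2$ gives $K_{n_k}(x_{n_k+1}) \to 0$, and implication $2 \Rightarrow 3$ gives $\mu_{n_k}(x_{n_k+1}) \to h(\tilde{x})$. Continuity of the Cholesky factorization on the cone of positive semi-definite matrices then yields $C_{n_k}(x_{n_k+1}) \to 0$. The fixed element $h \in \mathcal{H}$ is continuous (since $K$ is continuous), and so is $f = g \circ h$; together with compactness of $\domain$ this makes $M := \max_{x \in \domain} f(x)$ finite and makes $\{f_n^*\}$ nondecreasing and bounded above, so $f_n^* \uparrow f_\infty^*$ for some $f_\infty^* \leq M$. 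Passing to the limit in $g(h(x_{n_k+1})) \leq f_{n_k+1}^* \leq f_\infty^*$ and using continuity of $g \circ h$ yields $g(h(\tilde{x})) \leq f_\infty^*$.

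Combining these facts, for every fixed $z \in \R^m$ continuity of $g$ gives
\begin{equation*}
g\bigl(\mu_{n_k}(x_{n_k+1}) + C_{n_k}(x_{n_k+1}) z\bigr) - f_{n_k}^* \,\longrightarrow\, g(h(\tilde{x})) - f_\infty^* \,\leq\, 0,
\end{equation*}
so $\bigl\{g(\mu_{n_k}(x_{n_k+1}) + C_{n_k}(x_{n_k+1}) Z) - f_{n_k}^*\bigr\}^{+} \to 0$ almost surely in $Z$. To interchange limit and expectation I would invoke the dominated convergence theorem, with an integrable envelope supplied by standing assumption 5. Standard RKHS estimates give $|\mu_n(x)| \leq \|h\|_\mathcal{H} \sqrt{K(x,x)}$ componentwise, and the posterior covariance is dominated by the prior covariance, so the sequences $\{\mu_n(x_{n+1})\}_n$ and $\{C_n(x_{n+1})\}_n$ are uniformly bounded thanks to the continuity of $K$ and compactness of $\domain$. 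This yields $\nu_{n_k} = \ei_{n_k}(x_{n_k+1}) \to 0$, hence $\liminf_n \nu_n = 0$.

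The main obstacle is the dominated convergence step: assumption 5 requires uniform boundedness of $\{\mu_n(x_{n+1})\}$ and $\{C_n(x_{n+1})\}$ across all $n$, not merely along the convergent subsequence. The RKHS bound on the posterior mean and the monotonicity of the posterior covariance under further conditioning supply this uniform control; once the envelope is in hand, the remaining passage to the limit is routine. A secondary subtlety is that Lemma \ref{main_prop} is stated for sequences while we apply it along $\{n_k\}$, but its proof (which uses $m_n = \max\{k_\ell : k_\ell \leq n\}$ together with continuity of $K$) adapts without modification.
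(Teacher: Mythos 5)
Your proof is correct and follows essentially the same route as the paper's: extract a convergent subsequence of the EI-maximizing iterates, apply Lemma \ref{main_prop} to send the posterior mean to $h(\tilde{x})$ and the posterior covariance to zero, note $f(\tilde{x}) \leq f_\infty^*$, and conclude by dominated convergence (via assumption 5) that $\ei_{n_k}(x_{n_k+1}) \to 0$. The only difference is cosmetic: the paper handles the subsequence issue by constructing the auxiliary piecewise-constant sequence $x_n' = x_{k_\ell}$ for $k_{\ell-1} \leq n < k_\ell$ so that Lemma \ref{main_prop} applies verbatim, whereas you apply it directly along the subsequence and note its proof adapts.
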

\begin{proof}
Fix $h \in \mathcal{H}$ and let $\{x_n\}_{n\in\mathbb{N}}$ be the sequence of points generated by the expected improvement policy, i.e., $x_{n+1} \in \arg\max_{x\in\domain}\ei_n(x)$. Let $\tilde{x}$ be a limit point of $\{x_n\}_{n\in\mathbb{N}}$ and let $\{x_{k_n}\}_{n\in\mathbb{N}}$ be any subsequence converging to $\tilde{x}$. Consider the sequence $\{x_n'\}_{n\in\N}$ given by $x_n' = x_{k_\ell}$ for all $k_{\ell - 1} \leq n < k_{\ell}$, $n\in\N$. Clearly, $x_n' \rightarrow \tilde{x}$, and thus Lemma \ref{main_prop} implies that $\mu_{n}(x_n') \rightarrow h(\tilde{x})$ and $K_n(x_n') \rightarrow 0$. In particular, \pfcomment{Was this intentional that $x_{k_n-1}' = x_{k_n}$?  Or is dropping the $-1$ a typo?} \racomment{I think this is correct. Since $k_n -1$ satisfies $k_{n - 1} \leq k_n -1 < k_{n}$, by definition $x_{k_n-1}' = x_{k_n}$}
$\mu_{k_n - 1}(x_{k_n - 1}') \rightarrow h(\tilde{x})$ and $C_{k_n - 1}(x_{k_n - 1}') \rightarrow 0$, i.e., $\mu_{k_n -1}(x_{k_n}) \rightarrow h(\tilde{x})$ and $C_{k_n - 1}(x_{k_n}) \rightarrow 0$. Moreover, $\{f_n^*\}_{n\in\N}$ is a bounded increasing sequence, and thus has a finite limit, $f_{\infty}^{*}$, which satisfies $f_{\infty}^{*}\geq f(\tilde{x})$ as $\tilde{x}$ is a limit point of $\{x_n\}_{n\in\mathbb{N}}$ and $f$ is continuous. 

The sequences $\{\mu_{k_n -1}(x_{k_n})\}_{n\in\N}$ and $\{C_{k_n - 1}(x_{k_n})\}_{n\in\N}$ are convergent and thus bounded. Hence, from assumption 5 and the dominated convergence theorem we obtain that
\begin{align*}
    \expectation\left[\{g(\mu_{k_n -1}(x_{k_n}) + C_{k_n - 1}(x_{k_n})Z) - f_{k_{n-1}}^* \}^+\right] &\rightarrow \expectation\left[\{g(h(\tilde{x})) - f_\infty^*\}^+\right]\\
    &= \expectation\left[\{f(\tilde{x}) - f_\infty^*\}^+\right] = 0,
\end{align*}
but 
\begin{equation*}
   \nu_{k_{n-1}} = \expectation\left[\{g(\mu_{k_n -1}(x_{k_n}) + C_{k_n - 1}(x_{k_n})Z) - f_{k_{n-1}}^* \}^+\right], 
\end{equation*}
and thus the desired conclusion follows.
\end{proof}

\subsection{Proof of the Main Result}
We are now in position to prove that the expected improvement acquisition function is asymptotically consistent in the composite functions setting.
\begin{theorem}[Asymptotic consistency of EI-CF]
Assume that the covariance function, $K$, satisfies the GNEB property. Then, for any fixed $h\in\mathcal{H}$ and $x_\textnormal{init}\in\domain$, any (measurable) sequence $\{x_n\}_{n\in\N}$ with $x_1 = x_\textnormal{init}$ and $x_{n+1}\in\arg\max_{x\in\domain}\ei_n(x)$, $n\in\N$, satisfies $f_n^*\rightarrow M$.
\end{theorem}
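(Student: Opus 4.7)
The plan is to argue by contradiction using Lemma~\ref{ei_converges_zero}. The sequence $\{f_n^*\}_{n\in\N}$ is non-decreasing and bounded above by $M := \max_{x\in\domain} f(x)$ (attained since $h \in \mathcal{H}$ and continuity of $K$ make $h$ continuous, hence $f = g\circ h$ is continuous on the compact set $\domain$), so it converges to some $f_\infty^* \leq M$. Supposing $f_\infty^* < M$, I would fix any global maximizer $x^*$ of $f$ and aim to produce a subsequence along which $\ei_n(x^*)$ stays bounded away from $0$. Since $\ei_n(x^*) \leq \nu_n$, this would contradict the conclusion $\liminf_n \nu_n = 0$ of Lemma~\ref{ei_converges_zero}. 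The first easy observation is that $x^*$ cannot be a limit point of $\{x_n\}_{n\in\N}$, for otherwise continuity of $f$ would give $f(x^*) \leq f_\infty^* < M$ by passing to a subsequence of $\{f(x_n)\}$.

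The central step is to invoke the GNEB property in its contrapositive form: because $x^*$ is not a limit point of $\{x_n\}_{n\in\N}$, no subsequence of $\{K_n(x^*)\}_{n\in\N}$ converges to a singular matrix. Since $\{K_n(x^*)\}_{n\in\N}$ is bounded in operator norm (dominated by the prior covariance $K(x^*,x^*)$), any subsequential limit must be PSD and nonsingular, i.e., strictly positive definite. In parallel, the Cauchy--Schwarz bound $\|h(x^*) - \mu_n(x^*)\|_2 \leq \|K_n(x^*)\|_2^{1/2}\|h\|_{\mathcal{H}}$ used in the proof of Lemma~\ref{main_prop} shows that $\{\mu_n(x^*)\}_{n\in\N}$ is also bounded.

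Using Lemma~\ref{ei_converges_zero}, I can extract a subsequence $\{m_j\}_{j\in\N}$ with $\nu_{m_j} \to 0$, and by boundedness refine it further so that $\mu_{m_j}(x^*) \to \mu_\infty$ and $K_{m_j}(x^*) \to K_\infty \succ 0$. Continuity of the Cholesky factorization on the positive-definite cone then gives $C_{m_j}(x^*) \to C_\infty$ with $C_\infty$ nonsingular, and a dominated convergence argument (using continuity of $g$ and Assumption~5 exactly as in the proof that $\ei_n$ is continuous) yields
\begin{equation*}
\ei_{m_j}(x^*) \;\to\; \expectation\bigl[\{g(\mu_\infty + C_\infty Z) - f_\infty^*\}^+\bigr] \;=:\; \epsilon^*.
\end{equation*}
To show $\epsilon^* > 0$, I would use that $\mu_\infty + C_\infty Z$ has a strictly positive density on $\R^m$ (since $C_\infty$ is nonsingular), together with continuity of $g$ and $g(h(x^*)) = M > f_\infty^*$ to produce an open neighborhood $U$ of $h(x^*)$ on which $g > f_\infty^*$; the event $\{\mu_\infty + C_\infty Z \in U\}$ then has positive probability, forcing $\epsilon^* > 0$. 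This contradicts $\ei_{m_j}(x^*) \leq \nu_{m_j} \to 0$, completing the argument. The hard part is the second paragraph: the GNEB hypothesis is precisely what converts the geometric statement ``$x^*$ is not a limit point of $\{x_n\}$'' into the analytic nonsingularity of every subsequential limit of $K_n(x^*)$; everything afterwards reduces to dominated convergence and a routine open-set support argument.
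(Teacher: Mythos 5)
Your proof is correct and follows essentially the same route as the paper's: argue by contradiction, apply the GNEB property at a point that is not a limit point of the sampled sequence to obtain a nonsingular subsequential limit of the posterior covariance, conclude that the limiting value of $\ei$ there is strictly positive via the full support of the nondegenerate Gaussian, and contradict Lemma~\ref{ei_converges_zero}. The only differences are minor and in your favor: choosing the non-limit point to be a global maximizer removes the paper's separate treatment of the dense case and turns the positive-probability step into a clean open-set argument, and extracting the subsequence along which $\nu_n \to 0$ \emph{before} passing to convergent subsequences of $\mu_n(x^*)$ and $K_n(x^*)$ is the logically tighter order for deriving a contradiction from $\liminf_n \nu_n = 0$.
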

\begin{proof}
First note that if $\{x_n\}_{n\in\N}$ is dense in $\domain$, then, by continuity of $f$, $f_n^*\rightarrow M$. Thus, we may assume that $\{x_n\}_{n\in\N}$ is not dense in $\domain$. For the sake of contradiction, we also assume that $f_\infty^* :=\lim_{n\rightarrow\infty} f_n^* < M$, which implies that we can find $\epsilon > 0$ such that $f_\infty^* \leq M - 2\epsilon$. 

Since $\{x_n\}_{n\in\N}$ is not dense in $\domain$, there exists $x_\star \in\domain$ that is not a limit point of $\{x_n\}_{n\in\N}$.  Applying the Cauchy-Schwarz inequality in $\mathcal{H}$, we obtain
\begin{equation*}
    \|\mu_n(x_\star) - h(x_\star)\|_2 \leq \|K_n(x_\star)\|^\frac{1}{2}\|h\|_\mathcal{H}\leq \|K(x_\star)\|_2^\frac{1}{2}\|h\|_\mathcal{H},
\end{equation*}
where in the last inequality we use that the sequence $\{K_n(x_\star)\}_{n\in\N}$ satisfies $K_{n+1}(x_\star)\lesssim K_n(x_\star) \lesssim K(x_\star)$ for all $n\in\N$. It follows that both sequences $\{\mu_n(x_\star)\}_{n\in\N}$ and $\{K_n(x_\star)\}_{n\in\N}$ are bounded and thus we can find convergent subsequences $\{\mu_{k_n}(x_\star)\}_{n\in\N}$ and $\{K_{k_n}(x_\star)\}_{n\in\N}$, say with limits $\mu_\star$ and $K_\star$, respectively. The GNEB property implies that $K_\star$ is nonsingular. Let $C_\star$ be the upper cholesky factor of $K_\star$ and let $S_\epsilon = \{y\in\R^m : M - \epsilon \leq g(y) \leq M \}$. By continuity of $g$, $S_\epsilon$ has positive Lebesgue measure, and since $K_\star$ is nonsingular, $\mu_\star + C_\star Z$ is a multivariate normal random vector with full support. Hence, $\prob(\mu_\star + C_\star Z \in S_\epsilon) > 0$. Moreover,
\begin{align*}
    \expectation\left[\{g(\mu_\star + C_\star Z) - f_\infty^*\}^+\right]&\geq\expectation[\epsilon\mathbb{I}\{\mu_\star + C_\star Z \in S_\epsilon\}] \\
    &= \epsilon \prob(\mu_\star + C_\star Z \in S_\epsilon) > 0.
\end{align*}

Finally, using Fatou's lemma we obtain
\begin{equation*}
    \liminf\limits_{n \rightarrow \infty} \expectation\left[\{g(\mu_{k_n}(x_\star) + C_{k_n}(x_\star)Z) - f_{k_n}^* \}^+\right] \geq \expectation\left[\{g(\mu_\star + C_\star Z) - f_\infty^*\}^+\right] > 0,
\end{equation*}
i.e., $\liminf\limits_{n \rightarrow \infty}\ei_{k_n}(x_\star)>0$, which contradicts lemma \ref{ei_converges_zero}.
\end{proof}
\section{Auxiliary Results}
Here we state some basic results on multi-output Gaussian processes. Most of them are simple generalizations of well-known facts for single-output Gaussian processes but are included here for completeness.
\begin{lemma}
\label{lemma_cov_no_n}
Suppose that the sequence $\{x_n\}_{n\in\N}$ is deterministic. Then, for any fixed $x\in\domain$
\begin{equation*}
   K_{n}(x) = \cov(h(x) - \mu_{n}(x)). 
\end{equation*}
\end{lemma}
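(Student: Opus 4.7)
The plan is to exploit the standard fact that, for jointly Gaussian random vectors, the conditional covariance does not depend on the observed values and the residual from the conditional mean is independent of the conditioning sigma-algebra. Concretely, let $\mathcal{F}_n = \sigma(h(x_1), \ldots, h(x_n))$ denote the sigma-algebra generated by the observations. By the definition of the posterior GP on $h$,
\begin{equation*}
\mu_n(x) = \expectation[h(x) \mid \mathcal{F}_n], \qquad K_n(x) = \cov(h(x) \mid \mathcal{F}_n),
\end{equation*}
and because $(h(x), h(x_1), \ldots, h(x_n))$ is jointly Gaussian with deterministic evaluation points, the conditional covariance $K_n(x)$ is itself deterministic, given by the familiar Schur-complement expression in $\mu$ and $K$.

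The first step is to observe that, for Gaussians, the residual $R(x) := h(x) - \mu_n(x)$ satisfies $\expectation[R(x)\mid \mathcal{F}_n]=0$ and $\cov(R(x)\mid \mathcal{F}_n)=K_n(x)$, and moreover $R(x)$ is independent of $\mathcal{F}_n$. The second step is to apply the law of total covariance (which holds for vector-valued random variables componentwise):
\begin{equation*}
\cov(R(x)) \;=\; \expectation\bigl[\cov(R(x)\mid \mathcal{F}_n)\bigr] + \cov\bigl(\expectation[R(x)\mid \mathcal{F}_n]\bigr).
\end{equation*}
The first term equals $\expectation[K_n(x)] = K_n(x)$ since $K_n(x)$ is deterministic, and the second term vanishes because the conditional mean of $R(x)$ is identically zero. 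Combining the two yields $\cov(h(x)-\mu_n(x)) = K_n(x)$, which is the claim.

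I do not anticipate a genuine obstacle here; the only point that needs a moment's care is justifying that $K_n(x)$ is deterministic (not a random function of the observed values), which is specifically why the hypothesis of a \emph{deterministic} sequence $\{x_n\}_{n\in\N}$ appears. With $x_1, \ldots, x_n$ random but independent of $h$ one could recover the same identity after further conditioning on the design; if the $x_i$ depended on the observations, $K_n(x)$ would generally become random and the identity would only hold in conditional form. Because the paper only needs the deterministic-design version for its subsequent use in Lemma~\ref{main_prop}, the argument above suffices.
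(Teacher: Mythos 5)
Your proof is correct and follows essentially the same route as the paper's: both arguments hinge on the observation that $K_n(x)$ is deterministic when the design points are, and then remove the conditioning via the tower property (your law-of-total-covariance decomposition is just the tower property applied to the outer product, with the cross term killed by $\expectation[h(x)-\mu_n(x)\mid\mathcal{F}_n]=0$). The remark that the residual is independent of $\mathcal{F}_n$ is true for Gaussians but not needed; otherwise the argument matches the paper's.
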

\begin{proof}
This result may seem obvious at first sight, but it requires careful interpretation. By definition we have
\begin{align*}
    K_{n}(x) = \cov_n(h(x) - \mu_{n}(x)),
\end{align*}
but we claim that, indeed,
\begin{align*}
    K_{n}(x) = \cov(h(x) - \mu_{n}(x)),
\end{align*}
i.e., the same equality holds even if we do not condition on the information available at time $n$. To see this, it is enough to recall that $K_{n}(x)$ only depends on $x_1,\ldots, x_n$, but not on the values of $h$ at these points. Thus, the tower property of the expectation yields
\begin{align*}
    K_{n}(x) &= \expectation[K_{n}(x)]\\
    &= \expectation\left[\expectation_{n}\left[(h(x) - \mu_{n}(x))(h(x) - \mu_{n}(x))^\top\right]\right]\\
    &= \expectation\left[(h(x) - \mu_{n}(x))(h(x) - \mu_{n}(x))^\top\right]\\
    &= \cov(h(x) - \mu_{n}(x)),
\end{align*}
where in the last equality we use that $\expectation[h(x) - \mu_{n}(x)] = 0$, which can be verified similarly:
\begin{equation*}
   \expectation[h(x) - \mu_{n}(x)] = \expectation\left[\expectation_n[h(x) - \mu_{n}(x)] \right]
   = \expectation\left[0 \right]
   = 0.
\end{equation*}
\end{proof}
We emphasize that the sequence of points generated by the expected improvement acquisition function is deterministic once $h$ (the function to be evaluated, not the Gaussian process) is fixed and thus satisfies the conditions of lemma \ref{lemma_cov_no_n}.
\pfcomment{Should we clarify this above when there is a potential for tie-breaking when choosing the point with maximal EI?}

\begin{lemma}
\label{lemma_ineq_cov}
For any $x\in\domain$, $n\in\N$ and $i\in\{1,\ldots,n\}$,
\begin{equation*}
    \cov(h(x) - \mu_{n}(x)) \lesssim \cov(h(x) - h(x_{i})).
\end{equation*} 
\end{lemma}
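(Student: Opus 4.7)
\textbf{Proof plan for Lemma \ref{lemma_ineq_cov}.} My interpretation of ``$\lesssim$'' here is the Loewner (positive semidefinite) order on symmetric matrices, consistent with its use elsewhere in the appendix. The intuition is that $\mu_n(x)$ is the MMSE predictor of $h(x)$ from the observations $h(x_1),\ldots,h(x_n)$, so the residual covariance it achieves should be dominated by that of any predictor built from those observations, in particular by the trivial predictor $h(x_i)$.

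The plan is to decompose
\begin{equation*}
h(x) - h(x_i) = \bigl(h(x) - \mu_n(x)\bigr) + \bigl(\mu_n(x) - h(x_i)\bigr)
\end{equation*}
and show that the two summands are uncorrelated, so their covariance matrices add. First, recall that in the multi-output GP model, $(h(x), h(x_1),\ldots,h(x_n))$ is jointly Gaussian with mean zero, and $\mu_n(x)$ is the conditional mean $\mathbb{E}[h(x)\mid h(x_1),\ldots,h(x_n)]$, which is an affine function of the observations. The standard Gaussian conditioning identity (used implicitly throughout the appendix) says that $h(x) - \mu_n(x)$ is jointly Gaussian with the observations and, by construction of the conditional mean, has zero cross-covariance with each $h(x_j)$; jointly Gaussian plus zero cross-covariance gives independence. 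Hence $h(x) - \mu_n(x)$ is independent of the entire observation vector and, in particular, of the affine function $\mu_n(x) - h(x_i)$.

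By independence the cross-covariance matrices vanish, so
\begin{equation*}
\cov\bigl(h(x) - h(x_i)\bigr) = \cov\bigl(h(x) - \mu_n(x)\bigr) + \cov\bigl(\mu_n(x) - h(x_i)\bigr).
\end{equation*}
Since covariance matrices are PSD, $\cov(\mu_n(x) - h(x_i)) \succeq 0$, and the claimed Loewner inequality follows immediately. The lemma relies on Lemma \ref{lemma_cov_no_n} only in that we are free to interpret $\cov(h(x) - \mu_n(x))$ as the unconditional covariance of the residual, which equals $K_n(x)$ when the design sequence is deterministic.

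The only step that takes real care is the independence of $h(x) - \mu_n(x)$ from the observations; everything else is a one-line covariance decomposition. Once that independence is clearly stated (either by invoking the standard Gaussian conditioning formula or by arguing directly from the block formulas for $\mu_n$ and $K_n$ that the cross-covariance of $h(x) - \mu_n(x)$ with $h(x_j)$ is zero for every $j$), the argument is essentially automatic and avoids any heavier machinery.
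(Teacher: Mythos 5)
Your proof is correct, but it takes a different route from the paper's. You decompose the random vector $h(x) - h(x_i)$ into the prediction residual $h(x)-\mu_n(x)$ plus the estimator error $\mu_n(x)-h(x_i)$ and invoke the orthogonality principle: the residual of the Gaussian conditional mean has zero cross-covariance with the observations (hence, by joint Gaussianity, is independent of any function of them), so the covariances add and the extra term is positive semidefinite. The paper instead applies the law of total covariance to $h(x)-h(x_i)$, notes that conditional on the data both $\mu_n(x)$ and $h(x_i)$ are deterministic so that $\cov_n(h(x)-\mu_n(x)) = \cov_n(h(x)-h(x_i))$, and then uses Lemma \ref{lemma_cov_no_n} to drop the conditioning. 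The two arguments are close cousins (the law of total covariance is essentially the Pythagorean identity you are using in disguise), and both rest on the same hypothesis that the design points are deterministic so that $\mu_n(x)$ is a linear combination of the observations with nonrandom coefficients and $K_n(x)$ is a nonrandom matrix — you correctly flag this at the end. Your version makes the Gaussian structure more explicit and in fact needs only zero cross-covariance (not full independence) for the covariance to split, so it is marginally more elementary; the paper's version routes everything through Lemma \ref{lemma_cov_no_n}, which it needs elsewhere anyway. Either proof is acceptable.
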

\begin{proof}
By the law of total covariance, we have
\begin{align*}
    \expectation\left[\cov_n(h(x) - h(x_{i}))\right] + \cov\left(\expectation_n[h(x) - h(x_{i})]\right) = \cov(h(x) - h(x_{i})),
\end{align*}
which implies that 
\begin{align*}
    \expectation\left[\cov_n(h(x) - h(x_{i}))\right]  \lesssim \cov(h(x) - h(x_{i})),
\end{align*}
Moreover, conditioned on the information at time $n$, both $\mu_{n}(x)$ and $h(x_{i})$ are deterministic. Hence,  
\begin{equation*}
    \cov_n(h(x) - \mu_{n}(x)) = \cov_n(h(x) - h(x_{i})),
\end{equation*}
but by lemma \ref{lemma_cov_no_n} we know that $\cov_n(h(x) - \mu_{n}(x)) = \cov(h(x) - \mu_{n}(x))$, and thus $\expectation[\cov_n(h(x) - h(x_{i}))] = \cov(h(x) - \mu_{n}(x))$, which completes the proof.
\end{proof}

\pfcomment{Can we either have a proof (even if only 1 sentence) or a statement that the proof follows from Cauchy-Schwarz?  All of the other lemmas have proofs.}

\begin{lemma}
\label{lemma_order}
For any fixed $x\in\domain$ and $n\in\N$,
\begin{equation*}
     K_{n+1}(x)\lesssim K_n(x) \lesssim K(x)
\end{equation*}
\end{lemma}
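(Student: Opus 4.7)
The plan is to derive both inequalities from the law of total covariance (the matrix generalization of the conditional variance decomposition), exploiting the fact that for a deterministic sequence of query points, $K_n(x)$ is a function of $x_1,\ldots,x_n$ alone and does not depend on the observed values $h(x_1),\ldots,h(x_n)$. This is the same mechanism already used to establish Lemma \ref{lemma_cov_no_n}.

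First I would prove the outer bound $K_n(x)\lesssim K(x)$. Applying the law of total covariance to $h(x)$ with respect to the time-$n$ $\sigma$-algebra $\mathcal{F}_n = \sigma(x_1, h(x_1), \ldots, x_n, h(x_n))$ gives
\begin{equation*}
K(x) \;=\; \cov(h(x)) \;=\; \expectation\bigl[\cov_n(h(x))\bigr] + \cov\bigl(\expectation_n[h(x)]\bigr) \;=\; \expectation[K_n(x)] + \cov(\mu_n(x)).
\end{equation*}
Since the query points are deterministic, $K_n(x)$ is non-random, so $\expectation[K_n(x)] = K_n(x)$, yielding $K(x) - K_n(x) = \cov(\mu_n(x))$. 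The right-hand side is a covariance matrix and hence positive semidefinite, which gives the desired Loewner inequality.

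Next I would prove the monotonicity step $K_{n+1}(x)\lesssim K_n(x)$ by iterating the same argument one level deeper: apply the law of total covariance to $h(x)$ conditionally on $\mathcal{F}_n$, using $\mathcal{F}_{n+1}$ in the inner layer. This produces
\begin{equation*}
K_n(x) \;=\; \cov_n(h(x)) \;=\; \expectation_n\bigl[\cov_{n+1}(h(x))\bigr] + \cov_n\bigl(\expectation_{n+1}[h(x)]\bigr) \;=\; \expectation_n[K_{n+1}(x)] + \cov_n(\mu_{n+1}(x)).
\end{equation*}
Because $x_{n+1}$ is deterministic and $K_{n+1}(x)$ depends only on $x_1,\ldots,x_{n+1}$, the conditional expectation $\expectation_n[K_{n+1}(x)]$ equals $K_{n+1}(x)$. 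Hence $K_n(x) - K_{n+1}(x) = \cov_n(\mu_{n+1}(x))\succeq 0$, finishing the proof.

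I do not anticipate a serious technical obstacle; the one subtlety worth flagging is the justification that $K_n(x)$ and $K_{n+1}(x)$ can be treated as deterministic inside the outer expectation. This is precisely the observation used in Lemma \ref{lemma_cov_no_n}, so I would cite that lemma (rather than re-derive it) to keep the argument short.
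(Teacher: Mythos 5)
Your proof is correct, but it takes a genuinely different route from the paper's. The paper proves this lemma purely algebraically: it sets $K_0=K$ and invokes the one-step posterior update formula
\begin{equation*}
K_{n+1}(x) = K_n(x) - K_{n}(x,x_{n+1})K_{n}(x_{n+1},x_{n+1})^{-1}K_{n}(x_{n+1}, x),
\end{equation*}
from which both inequalities follow at once because the subtracted term has the form $AB^{-1}A^{\top}$ with $B$ positive definite, hence is positive semidefinite. You instead obtain both inequalities from the (conditional) law of total covariance, identifying the decrements explicitly as $K(x)-K_n(x)=\cov(\mu_n(x))$ and $K_n(x)-K_{n+1}(x)=\cov_n(\mu_{n+1}(x))$. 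Your argument is the same mechanism the paper uses for Lemma \ref{lemma_ineq_cov}, so it is stylistically consistent with the rest of the appendix, and it sidesteps the implicit invertibility of $K_n(x_{n+1},x_{n+1})$ that the update formula requires. The trade-off is that your route leans on the deterministic-design convention (via Lemma \ref{lemma_cov_no_n}, or $\mathcal{F}_n$-measurability of $x_{n+1}$) to replace $\expectation[K_n(x)]$ and $\expectation_n[K_{n+1}(x)]$ by the matrices themselves, whereas the paper's algebraic identity holds pathwise with no probabilistic input at all. Both proofs are valid; yours is arguably more informative about \emph{why} the covariance shrinks, the paper's is shorter and more self-contained.
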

\begin{proof}
Let $K_0 = K$. The standard formula for the posterior covariance matrix applied to the case where only one additional point is observed yields
\begin{equation*}
    K_{n+1}(x) = K_n(x) - K_{n}(x,x_{n+1})K_{n}(x_{n+1},x_{n+1})^{-1}K_{n}(x_{n+1}, x)
\end{equation*}
for all $n\geq 0$, from which the desired conclusion follows.
\end{proof}
\begin{lemma}
For any fixed $h\in\mathcal{H}$, $n\in\N$ and $x\in\domain$,
\begin{equation*}
    \|h(x) - \mu_n(x)\|_2 \leq \|K_{n}(x)\|_2^{\frac{1}{2}}\|h\|_{\mathcal{H}},
\end{equation*}
where $\|K_{n}(x)\|_2$ denotes the spectral norm of the matrix $K_{n}(x)$.
\end{lemma}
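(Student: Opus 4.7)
The plan is to establish the pointwise bound by combining the reproducing property of the multi-output RKHS $\mathcal{H}$ with an orthogonal decomposition that identifies $K_n$ as the reproducing kernel of an appropriate subspace. Recall that in the vector-valued setting, for every $x \in \domain$ and every $v \in \R^m$ the section $K(\cdot, x)v$ lies in $\mathcal{H}$ and satisfies $\langle f, K(\cdot, x) v\rangle_{\mathcal{H}} = v^\top f(x)$ for every $f \in \mathcal{H}$.

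First I would let $\mathcal{H}_0$ denote the closed span of $\{K(\cdot, x_i)v : v \in \R^m, \ i = 1,\ldots,n\}$ inside $\mathcal{H}$, and argue that $\mu_n$ is the orthogonal projection of $h$ onto $\mathcal{H}_0$. This holds because the explicit formula for $\mu_n$ in terms of $K$ writes it as a linear combination of sections $K(\cdot, x_i) v$, placing $\mu_n$ in $\mathcal{H}_0$; and because $\mu_n$ interpolates the observations ($\mu_n(x_i) = h(x_i)$), so the reproducing property gives $\langle h - \mu_n, K(\cdot, x_i) v \rangle_{\mathcal{H}} = v^\top(h(x_i) - \mu_n(x_i)) = 0$ for all $v$ and $i$, placing $h - \mu_n$ in $\mathcal{H}_0^\perp$. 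Pythagoras then yields $\|h - \mu_n\|_{\mathcal{H}} \leq \|h\|_{\mathcal{H}}$.

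Next I would use that $K_n$ is the reproducing kernel of $\mathcal{H}_0^\perp$. This follows from the standard posterior-covariance formula
\begin{equation*}
    K_n(x, x') = K(x, x') - K(x, X) K(X, X)^{-1} K(X, x'),
\end{equation*}
where $K(X, X)$ is the block Gram matrix at the design points: one checks directly that $K_n(\cdot, x) v \in \mathcal{H}_0^\perp$ and that $\langle f, K_n(\cdot, x) v \rangle_{\mathcal{H}} = v^\top f(x)$ for every $f \in \mathcal{H}_0^\perp$. Applying this to $h - \mu_n \in \mathcal{H}_0^\perp$ together with Cauchy--Schwarz gives, for any $v \in \R^m$,
\begin{equation*}
    \left| v^\top (h(x) - \mu_n(x)) \right|
    = \left| \langle h - \mu_n, K_n(\cdot, x) v \rangle_{\mathcal{H}} \right|
    \leq \|h - \mu_n\|_{\mathcal{H}} \sqrt{v^\top K_n(x) v}
    \leq \|h\|_{\mathcal{H}} \sqrt{v^\top K_n(x) v},
\end{equation*}
where I used $\|K_n(\cdot, x) v\|_{\mathcal{H}}^2 = v^\top K_n(x) v$. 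Specializing to the unit vector $v = (h(x) - \mu_n(x))/\|h(x) - \mu_n(x)\|_2$ when the denominator is nonzero (the bound being trivial otherwise) and bounding $v^\top K_n(x) v \leq \|K_n(x)\|_2$ for unit $v$ then yields the desired inequality.

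The main obstacle will be cleanly verifying the two RKHS structural facts in the vector-valued setting: that $\mu_n$ is the orthogonal projection onto $\mathcal{H}_0$, and that $K_n$ is the reproducing kernel of $\mathcal{H}_0^\perp$. Both are direct generalizations of familiar scalar facts, obtained by sliding the test vector $v$ through the reproducing pairing, but the verification requires some bookkeeping with block matrices indexed by the design points. Once these are in hand the remainder is a single Cauchy--Schwarz step followed by the standard quadratic-form-to-spectral-norm bound.
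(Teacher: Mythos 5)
Your proof is correct and is precisely the argument the paper intends: the paper actually leaves this lemma without any proof (there is only an internal author comment asking that one be added, with the suggestion that it ``follows from Cauchy--Schwarz in $\mathcal{H}$''), and your write-up --- identifying $\mu_n$ as the orthogonal projection of $h$ onto the span of the kernel sections at the design points, $K_n$ as the reproducing kernel of the orthogonal complement, then applying Cauchy--Schwarz and optimizing over unit vectors $v$ --- supplies exactly the missing details. The only point worth flagging is that the projection identity relies on the paper's standing assumption that the prior mean is identically zero, which you use implicitly through the explicit formula for $\mu_n$ as a linear combination of sections $K(\cdot,x_i)v$.
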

\putbib
\end{bibunit}
\end{document}